\documentclass{article}

\usepackage{graphicx}
\usepackage{subcaption}
\usepackage{booktabs} 
\usepackage[table]{xcolor}

\usepackage{comment}

\usepackage{hyperref}
\usepackage[accepted]{icml2026}
\usepackage{amsmath}
\usepackage{amssymb}
\usepackage{mathtools}
\usepackage{amsthm}

\usepackage{multirow} 
\usepackage{stfloats}
\usepackage[most]{tcolorbox}

\usepackage[capitalize,noabbrev]{cleveref}

\usepackage{tcolorbox}
\usepackage{xcolor}

\theoremstyle{plain}
\newtheorem{theorem}{Theorem}[section]

\theoremstyle{definition}

\theoremstyle{remark}

\usepackage[textsize=tiny]{todonotes}

\icmltitlerunning{Reliable LLM-as-Judge via Conformal Risk Control}

\begin{document}

\twocolumn[
  \icmltitle{SCOPE: Selective Conformal Optimized Pairwise LLM Judging}

  \begin{icmlauthorlist}
    \icmlauthor{Sher Badshah}{dal}
    \icmlauthor{Ali Emami}{emory}
    \icmlauthor{Hassan Sajjad}{dal}
  \end{icmlauthorlist}

  \icmlaffiliation{dal}{Faculty of Computer Science, Dalhousie University, Halifax, NS, Canada}
  \icmlaffiliation{emory}{Department of Computer Science, Emory University, Atlanta, GA, USA}

  \icmlcorrespondingauthor{Sher Badshah}{sh545346@dal.ca}

  \icmlkeywords{LLM-as-a-judge, pairwise evaluation, conformal risk control, selective prediction, uncertainty estimation, position bias, large language models}

  \vskip 0.3in
]



\printAffiliationsAndNotice{}  

\begin{abstract}
Large language models (LLMs) are increasingly used as scalable judges in pairwise evaluation, but they remain prone to miscalibration and biases. We propose \textsc{Scope} (Selective Conformal Optimized Pairwise Evaluation), a framework that calibrates an acceptance threshold so that, under exchangeability, the error rate among non-abstained judgments is at most a user-specified level $\alpha$. To supply \textsc{Scope} with a bias-neutral uncertainty signal, we introduce Bidirectional Preference Entropy (BPE), which queries the judge under both response positions and converts the order-averaged preference probability into an entropy-based score. Across various pairwise judging benchmarks, BPE outperforms standard confidence proxies in calibration and discrimination, while \textsc{Scope} consistently satisfies the target risk bound (empirical FDR $\approx 0.097$--$0.099$ at $\alpha=0.10$) and retains substantial coverage. Compared to vanilla baselines, \textsc{Scope} accepts up to $2.4\times$ more judgments under the same risk constraint, demonstrating that BPE enables reliable and high-coverage LLM-based evaluation.
\end{abstract}

\section{Introduction}
\label{sec:intro}

Large language models (LLMs) are increasingly used as judges to scale evaluation in modern AI workflows, from reinforcement learning to automated benchmarking and leaderboards~\citep{NEURIPS2023_91f18a12, lambert-etal-2025-rewardbench}. By replacing costly human preference labels with model-based pairwise comparisons, LLM-as-a-judge enables rapid evaluation. At the same time, once a model’s preferences substitute for human judgments, reliability becomes a first-class requirement: even a small, systematic rate of wrong pairwise decisions can distort rankings and the training signals derived from them. Concrete failure modes include RLHF reward judges that silently bias the trained policy toward stylistic artefacts such as length or verbosity~\citep{saito2023verbositybias, NEURIPS2023_91f18a12}, leaderboard comparisons that flip when two closely matched models are evaluated by a position-biased judge~\citep{shi-etal-2025-judging}, and large-scale annotation pipelines whose distilled preferences inherit the judge's self-preference and familiarity biases downstream~\citep{NEURIPS2024_7f1f0218}. What is missing is a principled way to decide when an LLM judgment should be trusted, together with an explicit, user-specified bound on the error rate among the judgments that are actually accepted.

Selective prediction provides a natural abstraction. Rather than forcing a decision on every instance, the judge abstains when uncertainty is high and returns a judgment only when sufficiently confident~\citep{NIPS2017_4a8423d5, chen-etal-2023-adaptation}. Yet applying selective prediction to pairwise LLM judging exposes two fundamental obstacles. First, thresholding confidence scores offers no finite-sample statistical guarantee that a target accepted-set error will be respected at deployment; thresholds tuned to match validation behavior can exceed the desired risk on new samples. Second, uncertainty proxies are contaminated by nuisance variation~\citep{wang-etal-2025-sconu}. Pairwise judges exhibit systematic biases such as position bias~\citep{shi-etal-2025-judging, wang2025eliminating}, and these effects can produce highly confident but incorrect judgments. As a result, naive confidence thresholding can fail to abstain precisely when it should, violating a user’s reliability constraint even when average calibration looks reasonable. Conversely, methods that do provide statistical guarantees often rely on conservative confidence bounds such as Clopper-Pearson~\citep{be7c0fd0_cp, wang2025coin} and fixed sequence testing~\citep{bauer1991multiple, jung2025trust}, which satisfy the constraint by rejecting a large fraction of queries and thereby sacrificing coverage~\citep{wang2025coin}.

This work proposes \textsc{Scope} (\textbf{S}elective \textbf{C}onformal \textbf{O}ptimized \textbf{P}airwise \textbf{E}valuation), a framework for selective pairwise judging with finite-sample statistical guarantees. \textsc{Scope} builds on selective conformal prediction and risk control methods~\citep{angelopoulos2023gentle, angelopoulos2024conformal, wang2025lec} to calibrate an acceptance threshold such that the error rate among accepted judgments is at most a user-specified level $\alpha$ under exchangeability. Unlike heuristic thresholding or naive empirical tuning, \textsc{Scope} enforces a finite-sample validity condition on the non-abstained judgments.

Guarantees alone, however, are only useful when the threshold responds to genuine ambiguity in the preference rather than presentation bias. To equip \textsc{Scope} with a bias-neutral uncertainty signal, we introduce Bidirectional Preference Entropy (BPE). BPE queries the judge under both orderings of the response pair, aligns the two outputs to the same underlying preference, and then aggregates the resulting preference probabilities. An entropy-based score is then applied to this aggregated probability: uncertainty is highest when the judge is close to evenly split between the two responses, and lowest when one response is strongly preferred. By enforcing permutation invariance with respect to response order, BPE mitigates position effects with only two forward passes per pair, yielding uncertainty estimates that better reflect the underlying difficulty of the preference decision. Empirically, BPE improves calibration and discrimination over standard confidence proxies, and \textsc{Scope} leverages these scores to accept more judgments while meeting the target risk level across benchmarks and model scales.

\paragraph{Contributions.}
Our contributions are twofold:
\begin{itemize}
    \item \textsc{Scope}: a conformal-based method for selective LLM-based pairwise evaluation, with a finite-sample guarantee that the error rate among accepted judgments is at most $\alpha$ under exchangeability.
    \item BPE: a bidirectional, permutation-invariant uncertainty estimator that mitigates position effects and improves uncertainty quality over standard confidence proxies.
\end{itemize}

\section{Methodology}
\label{sec:method}

We formalize \textsc{Scope} for pairwise judging with statistical guarantees of alignment with human preferences.

\begin{figure*}[t]
    \centering
    \includegraphics[width=\textwidth]{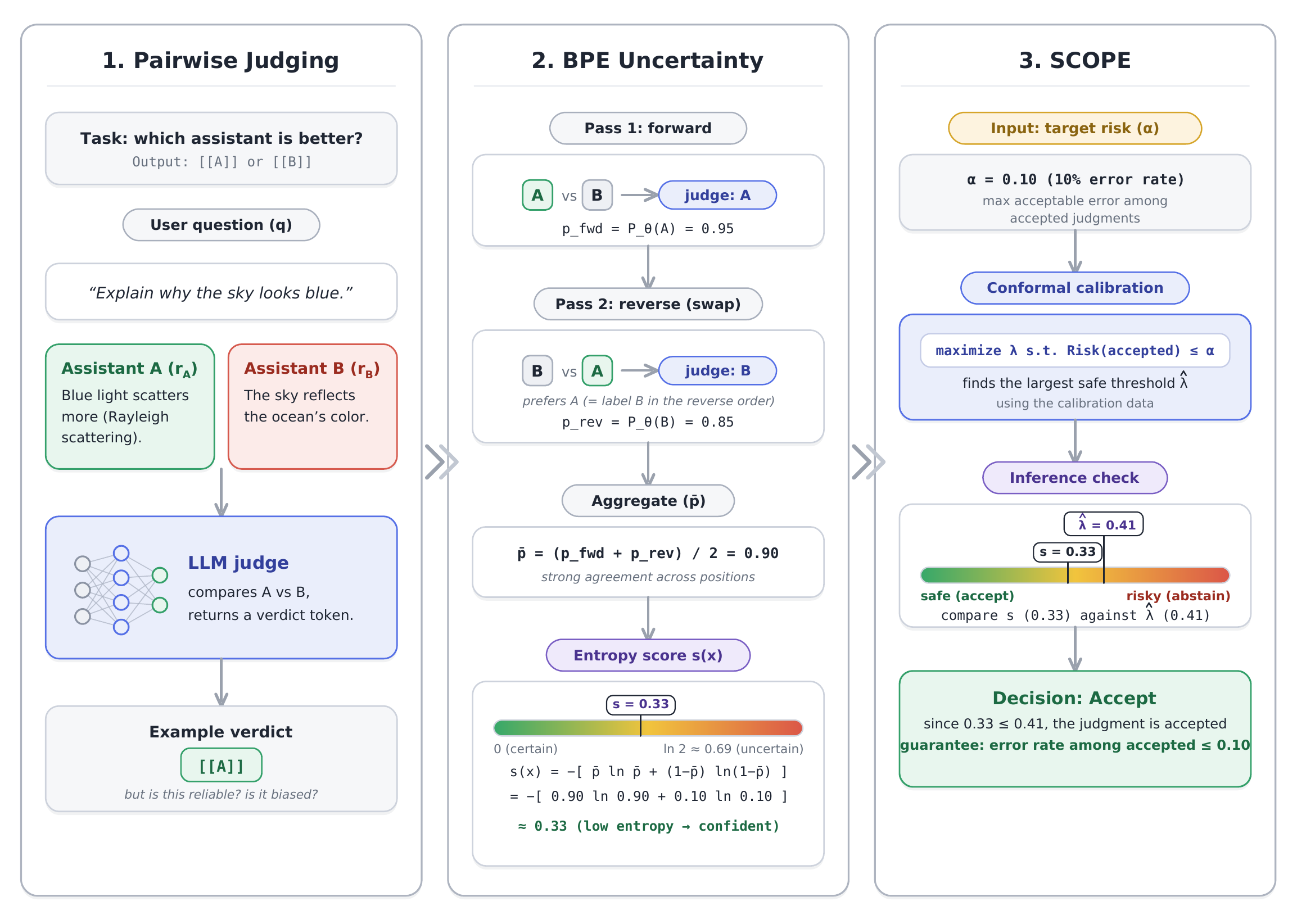} 
   \caption{\textbf{Overview of the SCOPE framework.} \textbf{(1) Pairwise judging:} for a user query $q$ and two candidate responses
$(r_A, r_B)$, the LLM judge returns a single preference token. This raw verdict
is sensitive to the order in which the two responses are presented.
\textbf{(2) Bidirectional Preference Entropy (BPE):} to neutralize potential position
bias, the pair is scored in both orders. The forward pass gives
$p_{\mathrm{fwd}} = P_\theta(y{=}A \mid x_{\mathrm{fwd}})$ and the reverse pass
gives $p_{\mathrm{rev}} = P_\theta(y{=}B \mid x_{\mathrm{rev}})$, since preferring
$r_A$ corresponds to label $A$ in the forward order and label $B$ in the reverse
order. The two are averaged into a bias-neutral preference
$\bar{p} = (p_{\mathrm{fwd}} + p_{\mathrm{rev}})/2$, and its binary entropy
$s(x) = -[\bar{p}\ln\bar{p} + (1-\bar{p})\ln(1-\bar{p})]$ (in nats, maximized at
$\bar{p}{=}0.5$ with value $\ln 2 \approx 0.69$) is the uncertainty score: small
$s(x)$ reflects a confident, order-stable preference, whereas large $s(x)$
reflects ambivalence or disagreement across orders.
\textbf{(3) SCOPE:} the user specifies a target risk $\alpha$ (e.g., $0.10$),
the largest acceptable error rate among accepted judgments. Conformal calibration
on a labeled set returns the largest threshold $\hat{\lambda}$ certified to keep
this error within $\alpha$. At test time, the judgment is accepted when
$s(x) \le \hat{\lambda}$ and is abstained otherwise.
}
\label{fig:scope_methodology}
\end{figure*}

\subsection{Problem Formulation}
\label{subsec:problem_formulation}
Let $\mathcal{X}$ denote the space of evaluation instances, where each $x \in \mathcal{X}$ consists of a user instruction $q$ and a pair of candidate responses $(r_A, r_B)$. Let $\mathcal{Y} = \{A, B\}$ be the label space, where $y = A$ indicates that $r_A$ is preferred and $y = B$ indicates that $r_B$ is preferred. We assume access to samples from a joint distribution $\mathcal{D}$ over $\mathcal{X} \times \mathcal{Y}$, representing ground-truth human preferences.

\paragraph{Selective prediction.}
An LLM judge defines a distribution $P_\theta(y \mid x)$ over labels $\mathcal{Y}$. Rather than always committing to a prediction, we adopt selective prediction: the model outputs a judgment only when sufficiently confident~\citep{NIPS2017_4a8423d5, chen-etal-2023-adaptation}. 

We define an uncertainty scoring function $s: \mathcal{X} \to \mathbb{R}$, where higher values indicate greater uncertainty. Given a threshold $\lambda$, the selective judge accepts predictions with uncertainty below the threshold:
\begin{equation*}
    f_{\lambda}(x) = 
    \begin{cases} 
      \hat{y} & \text{if } s(x) \leq \lambda, \\
      \perp & \text{otherwise},
    \end{cases}
\end{equation*}
where $\hat{y}: \mathcal{X} \to \mathcal{Y}$ is a deterministic prediction rule (instantiated by BPE in Section~\ref{subsec:uncertainty}) and $\perp$ denotes abstention.

\paragraph{False Discovery Rate (FDR) Control.}
Our objective is to calibrate $\hat{\lambda}$ such that the error rate among the accepted LLM judgments is statistically bounded. Let $y^*$ denote the ground truth label. We define the selection indicator $S(x, \lambda) = \mathbb{I}(s(x) \leq \lambda)$ and the error indicator $E(x) = \mathbb{I}(\hat{y} \neq y^*)$. Borrowing terminology from multiple-testing, we refer to the selection-conditioned error rate as the false discovery rate (FDR), i.e., the expected fraction of incorrect judgments among those the judge accepts. To control the test-time FDR marginally, we bound the ratio of expected errors to expected selections:
\begin{equation}
    \text{FDR}(\lambda) = \frac{\mathbb{E}[S(x, \lambda) E(x)]}{\mathbb{E}[S(x, \lambda)]} = P\big(\hat{y} \neq y^* \mid s(x) \leq \lambda\big) \leq \alpha.
    \label{eq:fdr_def}
\end{equation}

This formulation ensures that the judge maintains a bounded error rate (e.g., $\le 5\%$ error for $\alpha=0.05$) across the distribution of queries. Crucially, this statistical validity holds regardless of the model's intrinsic capability as long as the calibration and test data are exchangeable.

\subsection{Bidirectional Preference Entropy}
\label{subsec:uncertainty}

To calibrate $\lambda$, the scoring function $s(x)$ must reflect true uncertainty. However, existing uncertainty methods are often miscalibrated for pairwise judging~\citep{jung2025trust}. For instance, standard predictive probability becomes unreliable when the model systematically favors a particular position~\citep{NEURIPS2023_91f18a12}. 

To prevent such systematic biases from contaminating $s(x)$, we propose Bidirectional Preference Entropy (BPE), where we aggregate predictions across both positions. Let $x_{\text{fwd}} = (q, r_A, r_B)$ denote the original position and $x_{\text{rev}} = (q, r_B, r_A)$ the swapped position. We compute the probability that the model prefers $r_A$ under each position:

\begin{align*}
    p_{\text{fwd}} &= P_\theta(r_A \succ r_B \mid x_{\text{fwd}}) = P_\theta(y = A \mid x_{\text{fwd}}), \\
    p_{\text{rev}} &= P_\theta(r_A \succ r_B \mid x_{\text{rev}}) = P_\theta(y = B \mid x_{\text{rev}}).
\end{align*}

Importantly, selecting $r_A$ corresponds to predicting label $A$ in the forward prompt and label $B$ in the reverse prompt. Intuitively, a reliable judge should assign a similar preference probability to $r_A$ regardless of whether it appears first or
second; disagreement across permutations is a strong indicator of systematic bias rather than true confidence. We therefore aggregate both by averaging:

\begin{equation}
    \bar{p} = \frac{1}{2}\Big(p_{\text{fwd}} + p_{\text{rev}}\Big).
\end{equation}

Since the task is binary, the probability of preferring $r_B$ is simply $(1-\bar{p})$. Because binary entropy is symmetric (i.e., $H(p)=H(1-p)$), computing uncertainty with respect to $r_A$ suffices without loss of generality. 

We derive the final prediction $\hat{y}$ from $\bar{p}$ and define the BPE score as:
\begin{equation}
    s(x) = -\left[ \bar{p} \log \bar{p} + (1-\bar{p}) \log (1-\bar{p}) \right].
\end{equation}

The score $s(x)$ is maximized when $\bar{p}=0.5$, and order disagreement raises $s(x)$ by shifting $\bar{p}$ toward $0.5$. BPE therefore penalizes position bias through the aggregated preference rather than
measuring it directly (Appendix~\ref{app:bpe_detail}).

\subsection{\textsc{Scope} Calibration}
\label{subsec:scope}

Given a calibration set $\mathcal{D}_{\text{cal}} = \{(x_i, y^*_i)\}_{i=1}^n$ with ground-truth labels, we seek the threshold $\hat{\lambda}$ that maximizes coverage while ensuring the marginal FDR does not exceed a target $\alpha$.

\paragraph{Linearization.}
Directly controlling the ratio in Eq.~\ref{eq:fdr_def} is difficult on finite samples: when few predictions are accepted, the denominator $\mathbb{E}[S]$ is small and the ratio becomes unstable. Following~\citet{wang2025lec} and~\citet{wang-etal-2024-conu}, we reformulate the constraint using a linearized loss for pairwise judging:
\begin{equation}
    L(x, \lambda) = S(x, \lambda) \cdot (E(x) - \alpha).
\end{equation}
The key observation is that $\mathbb{E}[L(x, \lambda)] \leq 0$ implies marginal $\text{FDR}(\lambda) \leq \alpha$. This reframes risk control~\citep{angelopoulos2024conformal} as a budgeting problem where each correct accepted prediction contributes $-\alpha$ to a cumulative sum (building a safety margin), and each incorrect accepted prediction contributes $+(1-\alpha)$ (depleting the margin).

\paragraph{Finite-sample calibration.}
To guarantee validity on unseen test data, we enforce a finite-sample sufficient condition derived from the theory of linear expectation constraints~\citep{wang2025lec}. Specifically, we require the cumulative linearized loss on the calibration set to satisfy:

\begin{equation}
    \sum_{i=1}^n S(x_i, \lambda) \cdot (E(x_i) - \alpha) \leq -1.
    \label{eq:scope_condition}
\end{equation}
Intuitively, the ``$-1$'' provides a budget that absorbs the worst-case contribution of a single unseen test point: since $L(x,\lambda)=S(x,\lambda)(E(x)-\alpha)\leq 1-\alpha<1$, requiring the calibration sum to be at most $-1$ guarantees that even the largest possible test-point loss cannot push the average of $n{+}1$ linearized losses above zero. This finite-sample correction ensures statistical validity under exchangeability.

\paragraph{Coverage maximization.} Because newly admitted samples may contribute either $-\alpha$ or $(1-\alpha)$, the feasibility of Eq.~\ref{eq:scope_condition} need not be monotone in $\lambda$. We thus select the largest feasible threshold to maximize coverage:
\begin{equation}
    \hat{\lambda} = \sup \left\{ \lambda : \sum_{i=1}^n S(x_i, \lambda) \cdot (E(x_i) - \alpha) \leq -1 \right\}.
    \label{eq:scope_opt}
\end{equation}

If no such $\lambda$ exists (i.e., the set is empty), we set $\hat{\lambda} = -\infty$ and abstain on all instances. This ensures that \textsc{Scope} maximizes the number of evaluated instances without violating the risk guarantee.

\paragraph{BPE preserves exchangeability.}
Because BPE is a deterministic mapping $x \mapsto (s(x), \hat{y}(x))$ (two greedy forward passes at $T=0$), exchangeability of the labeled pairs $(x_i, y^*_i)$ transfers to the induced tuples $(s(x_i), \hat{y}(x_i), y^*_i)$. Consequently, the selection and error indicators $\big(S(x_i,\lambda), E(x_i)\big)$ entering Eq.~\ref{eq:scope_condition} are also exchangeable across calibration and test, which is the assumption used in the validity proof.

\begin{theorem}
    \label{thm:scope_validity}
    Let calibration and test samples be exchangeable~\cite{angelopoulos2023gentle}. For any $\alpha \in (0, 1)$, the threshold $\hat{\lambda}$ derived in Eq. \ref{eq:scope_opt} guarantees that the marginal test-time FDR satisfies:
    \begin{equation}
        \frac{\mathbb{E}[E(x_{n+1}) \cdot S(x_{n+1}, \hat{\lambda})]}{\mathbb{E}[S(x_{n+1}, \hat{\lambda})]} \leq \alpha,
    \end{equation}
    where the expectation is taken over the joint randomness of the calibration set and the test sample.
\end{theorem}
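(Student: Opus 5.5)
The plan is to reduce the ratio statement to a statement about the linearized loss, and then prove that statement with a leave-one-out exchangeability argument in the style of conformal risk control \citep{angelopoulos2024conformal, wang2025lec}. Write $L_i(\lambda) = S(x_i,\lambda)(E(x_i)-\alpha)$ for $i=1,\dots,n+1$, with index $n+1$ the test point. If $\mathbb{E}[S(x_{n+1},\hat\lambda)]=0$ the statement is vacuous (or we adopt the convention that FDR is $0$). Otherwise, $\mathbb{E}[L_{n+1}(\hat\lambda)] \le 0$ rearranges to $\mathbb{E}[E\,S] \le \alpha\,\mathbb{E}[S]$, which is exactly the claim. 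So the whole task is to show $\mathbb{E}[L_{n+1}(\hat\lambda)]\le 0$, with the expectation taken jointly over calibration and test data.

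\textbf{Step 1: leave-one-out thresholds.} For each $j\in\{1,\dots,n+1\}$, let $\hat\lambda_{-j}$ be the threshold of Eq.~\ref{eq:scope_opt} computed from the $n$ points $\{(x_i,y_i^*)\}_{i\neq j}$. Then $\hat\lambda = \hat\lambda_{-(n+1)}$. Each $\hat\lambda_{-j}$ is the same symmetric function of the multiset of the other $n$ points. As noted above, BPE is deterministic, so the tuples $(s(x_i),\hat y(x_i),y_i^*)$ are exchangeable. Hence the pairs $(L_j(\cdot),\hat\lambda_{-j})$ are identically distributed in $j$, and
\[
\mathbb{E}[L_{n+1}(\hat\lambda)] = \frac{1}{n+1}\,\mathbb{E}\Big[\sum_{j=1}^{n+1} L_j(\hat\lambda_{-j})\Big].
\]
It therefore suffices to prove the deterministic inequality $\sum_j L_j(\hat\lambda_{-j}) \le 0$ for every realization.

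\textbf{Step 2: the deterministic inequality.} Condition on the multiset of $n+1$ points and sort them by score. Every accepted set is then a prefix in this order, and each $\hat\lambda_{-j}$ is the largest prefix cutoff whose partial sum over the other points is $\le -1$. The key fact to exploit is that each term satisfies $L_j \le 1-\alpha < 1$. So whenever $\hat\lambda_{-j}$ is feasible for the reduced sum, the full $(n+1)$-point sum at $\hat\lambda_{-j}$ is at most $-1 + L_j(\hat\lambda_{-j}) < 0$.

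I would then try to compare all $\hat\lambda_{-j}$ with a single symmetric oracle cutoff $\lambda^\ast$, namely the largest prefix cutoff with full sum $\le -1+(1-\alpha)$.
\begin{itemize}
\item Points $j$ with $L_j(\hat\lambda_{-j}) > 0$ are errors with $s(x_j)\le \hat\lambda_{-j}$. Removing an error can only lower partial sums, which enlarges feasibility. So for these points $\hat\lambda_{-j}$ should be sandwiched relative to $\lambda^\ast$.
\item Points that are correct contribute $-\alpha$ or $0$, which only helps the inequality.
\end{itemize}
Summing the positive and negative contributions and using the $-1$ slack should yield $\sum_j L_j(\hat\lambda_{-j})\le 0$.

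\textbf{Main obstacle.} The hard part is Step 2, because feasibility in Eq.~\ref{eq:scope_opt} is not monotone in $\lambda$. The sup may jump past regions where the partial sum is positive, so $\hat\lambda_{-j}$ can differ substantially across $j$. This breaks the usual ``all leave-one-out thresholds lie on one side of the oracle'' argument.

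My first attempt would be a careful prefix-sum case analysis. Removing a correct point raises all later partial sums by $\alpha$, and removing an error lowers them by $1-\alpha$. I would track how the last crossing of level $-1$ moves under each removal, and bound the number of errors that can sit beyond $\lambda^\ast$ yet be admitted by their own leave-one-out threshold.

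If that fails, the fallback is to prove the theorem for the monotonized threshold. This means taking the largest $\lambda$ such that the constraint holds at every $\lambda'\ge\lambda$ among observed scores, or equivalently replacing the partial sum by its running maximum from the right. Under this version the standard monotone conformal-risk-control argument applies directly, and I would then check that the paper's $\hat\lambda$ coincides with this threshold or is dominated in the relevant sense.
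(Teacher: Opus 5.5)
Your Step~1 is correct, but Step~2 cannot be completed. The deterministic inequality $\sum_{j=1}^{n+1}L_j(\hat\lambda_{-j})\le 0$ is false, and so is its expectation, so the statement itself fails for the $\hat\lambda$ of Eq.~\ref{eq:scope_opt}.

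Take $\alpha=1/2$ and $n=2$. Then Eq.~\ref{eq:scope_condition} holds only if both calibration points are accepted and correct. Order three points by score as (error, correct, correct). When the error plays the test role, the two correct points form a feasible calibration set whose threshold lies above its score, so it is accepted. When a correct point plays the test role, its calibration set contains the error, no partial sum reaches $-1$, and it is rejected. Hence $\sum_j L_j(\hat\lambda_{-j})=1/2>0$.

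This is the non-monotonicity you flagged, and it is fatal. Deleting an error from the calibration set makes Eq.~\ref{eq:scope_condition} easier to satisfy, so errors are systematically favoured by their own leave-one-out thresholds. The $-1$ slack does not pay for this.

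The failure persists at the distribution level. Take i.i.d.\ samples in which the human label disagrees with $\hat y(x)$ independently of $x$ with probability $p\in(\alpha,1)$, and let $n\ge 1/\alpha$. Then $E(x_{n+1})$ is independent of $S(x_{n+1},\hat\lambda)$. Also $\Pr(S(x_{n+1},\hat\lambda)=1)>0$: with probability $(1-p)^n$ all calibration points are correct, the full calibration sum is $-n\alpha\le -1$, and so $\hat\lambda\ge\max_i s(x_i)$. Hence the ratio in the theorem equals exactly $p>\alpha$. The same independence argument defeats your monotonized fallback, which reduces to ``accept everything iff the total calibration sum is at most $-1$.'' It defeats any rule that accepts with positive probability in this example, so neither branch of your plan can work.

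The paper's proof reaches the claim only through Eq.~\ref{eq:exchangeability_average}. That identity equates $\mathbb{E}[L(Z_{n+1},\hat\lambda)]$ with the $(n+1)$-point average at the \emph{same} $\hat\lambda$. Since $\hat\lambda$ is symmetric in $Z_{1:n}$ but not in $Z_{1:n+1}$, the identity is unjustified. In the independence example, $\mathbb{E}[\sum_{i\le n}L(Z_i,\hat\lambda)]<0<n\,\mathbb{E}[L(Z_{n+1},\hat\lambda)]$, which contradicts it. Eq.~\ref{eq:cal_constraint_in_proof} also fails on the event that the feasible set is empty, where the calibration sum is $0$. Your leave-one-out Step~1 is the correct version of that exchangeability step, and it exposes exactly the asymmetry the paper's argument drops.

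What these tools can certify instead is one of the following:
\begin{enumerate}
\item A quantity with a per-sample monotone loss, e.g.\ $\Pr\big(S(x_{n+1},\hat\lambda)=1,\,E(x_{n+1})=1\big)\le\alpha$. This follows from standard conformal risk control applied to the loss $S(x,\lambda)E(x)$, which is nondecreasing in $\lambda$.
\item A high-probability bound on the conditional error rate at $\hat\lambda$, via Clopper--Pearson bounds with fixed-sequence testing, as in the Trust-or-Escalate baseline.
\end{enumerate}
The marginal ratio as stated is not controllable in this distribution-free sense.
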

At test time, for a new instance $x$, we obtain the judge's pairwise evaluation $\hat{y}$ with uncertainty $s(x)$. We accept $\hat{y}$ if and only if $s(x) \leq \hat{\lambda}$; otherwise, we abstain. See Appendix~\ref{app:proofs} for the complete proof.

\section{Experiments}
\label{sec:experiments}

Our experiments address two questions: (1) Does BPE provide better uncertainty estimates than existing methods? (2) Does \textsc{Scope} maintain valid risk control while maximizing coverage?

\paragraph{Datasets.}
We evaluate on human-annotated pairwise preferences from three benchmarks: MT-Bench~\citep{NEURIPS2023_91f18a12}, RewardBench~\citep{lambert-etal-2025-rewardbench}, and Chatbot Arena~\citep{10.5555/3692070.3692401}. The stated benchmarks span diverse evaluation settings: MT-Bench reflects instruction-following and multi-turn assistant quality, RewardBench captures reward-model-style preference signals across heterogeneous sources, and Chatbot Arena represents large-scale crowdsourced user comparisons in open-domain settings.

Following the approach of~\citet{jung2025trust}, we exclude tie outcomes to align with our binary formulation ($\mathcal{Y}=\{A,B\}$). After filtering, we randomly sample $N=2{,}000$ non-tied instances to standardize evaluation size and keep the computational cost manageable across our repeated random splits. We additionally evaluate on JudgeBench~\citep{ICLR2025_9e720fce} and PKU-SafeRLHF~\citep{ji-etal-2025-pku} (see Appendix~\ref{app:extended_benchmarks}).

\paragraph{Models.}
Our judge models span a range of scales: Qwen-2.5-7B-Instruct, Qwen2.5-14B-Instruct, Qwen2.5-32B-Instruct~\citep{yang2024qwen2technicalreport}, and Llama-3.1-70B-Instruct~\citep{grattafiori2024llama3herdmodels}.

\paragraph{Uncertainty quantification.}
For BPE, we compute $p_{\text{fwd}}$ and $p_{\text{rev}}$ from the softmax-normalized logits over predictions ``A'' and ``B''. For evaluation metrics that expect confidence rather than uncertainty (i.e., ECE, AUROC), we convert BPE into a confidence score via $c(x)=\max(\bar{p}, 1-\bar{p})$. 

\paragraph{Baselines.}
We benchmark BPE against four uncertainty estimation methods for LLM judges, and \textsc{Scope} against three selective prediction strategies.

\paragraph{Uncertainty estimation baselines.}
The simplest baseline, \emph{Predictive Probability}\footnote{We utilize predictive probability for the heuristic and na\"ive baselines.}, uses the maximum softmax probability of the zero-shot preference prediction. A complementary, generation-based alternative is \emph{Verbalized Confidence}~\citep{tian-etal-2023-just}, which prompts the judge to directly output a numerical confidence score. The closest relative to BPE is \emph{Swap-and-Aggregate}~\citep{NEURIPS2023_91f18a12}, which queries the judge under both response orderings and uses the discrete agreement rate as confidence. It shares BPE's bidirectional prediction rule but replaces probability-level entropy with a coarse vote-level signal. Finally, \emph{Simulated Annotators}~\citep{jung2025trust} estimates confidence via the majority-vote agreement rate among $N=5$ in-context personas, each conditioned on $K=5$ few-shot demonstrations sampled from a pool of 50; because this baseline requires multiple generations per instance, we restrict it to Qwen-7B and Qwen-14B judges.

\paragraph{Selective prediction baselines.}
We compare \textsc{Scope} against three abstention strategies. \emph{Vanilla} prediction answers all queries without abstention, yielding full coverage but no reliability control. \emph{Heuristic thresholding} is a simple confidence-based rule that accepts predictions whose uncertainty score exceeds $1-\alpha$, without any calibration guarantee. \emph{Na\"ive calibration} selects thresholds based on empirical risk measured on held-out data but applies no finite-sample correction and can therefore violate the target risk constraint.

\begin{table*}[t]
\caption{Uncertainty estimation quality. Comparison of BPE against baselines, including the closely related Swap-and-Aggregate (S\&A)~\citep{NEURIPS2023_91f18a12}, which shares BPE's bidirectional prediction rule but uses discrete agreement voting as its confidence signal. Bold indicates the best result per model/dataset. BPE achieves superior calibration (ECE $\downarrow$) and discrimination (AUPRC) in most settings, and strictly improves over S\&A on ECE, AUROC, and AUPRC.}
\label{tab:uncertainty_baselines}
\footnotesize
\centering
\setlength{\tabcolsep}{4pt}
\begin{tabular}{l @{\hskip 9pt} cccc @{\hskip 12pt} cccc @{\hskip 12pt} cccc}
\toprule
& \multicolumn{4}{c}{\textbf{MT-Bench}} 
& \multicolumn{4}{c}{\textbf{RewardBench}} 
& \multicolumn{4}{c}{\textbf{Chatbot Arena}} \\
\cmidrule(lr){2-5} \cmidrule(lr){6-9} \cmidrule(lr){10-13}
\textbf{Method} 
& Accuracy & ECE\,$\downarrow$ & ROC & PRC 
& Accuracy & ECE\,$\downarrow$ & ROC & PRC 
& Accuracy & ECE\,$\downarrow$ & ROC & PRC \\
\midrule
\multicolumn{13}{c}{\textbf{\textit{Qwen2.5-7B-Instruct}}} \\
\midrule
Predictive Probability    & 0.731 & 0.239 & 0.658 & 0.824 & 0.757 & 0.210 & 0.727 & 0.891 & 0.751 & 0.218 & 0.709 & 0.865 \\
Verbalized Confidence     & 0.731 & 0.146 & 0.497 & 0.737 & 0.757 & 0.128 & 0.496 & 0.765 & 0.751 & \textbf{0.129} & 0.477 & 0.747 \\
Swap-and-Aggregate        & \textbf{0.738} & 0.193 & 0.656 & 0.826 & \textbf{0.807} & 0.152 & 0.673 & 0.897 & \textbf{0.766} & 0.171 & 0.695 & 0.868 \\
BPE (Ours)           & \textbf{0.738} & \textbf{0.143} & \textbf{0.685} & \textbf{0.855} & \textbf{0.807} & \textbf{0.104} & \textbf{0.744} & \textbf{0.926} & \textbf{0.766} & 0.138 & \textbf{0.711} & \textbf{0.884} \\
\midrule
\addlinespace[5pt]
\multicolumn{13}{c}{\textbf{\textit{Qwen2.5-14B-Instruct}}} \\
\midrule
Predictive Probability    & 0.752 & 0.234 & 0.704 & 0.851 & 0.850 & 0.140 & 0.739 & 0.924 & 0.779 & 0.207 & 0.697 & 0.866 \\
Verbalized Confidence     & 0.752 & \textbf{0.114} & 0.450 & 0.736 & 0.850 & \textbf{0.090} & 0.502 & 0.865 & 0.779 & \textbf{0.114} & 0.476 & 0.779 \\
Swap-and-Aggregate        & \textbf{0.766} & 0.204 & 0.679 & 0.851 & \textbf{0.874} & 0.113 & 0.704 & 0.929 & \textbf{0.790} & 0.177 & 0.680 & 0.867 \\
BPE (Ours)           & \textbf{0.766} & 0.174 & \textbf{0.777} & \textbf{0.932} & \textbf{0.874} & 0.103 & \textbf{0.809} & \textbf{0.970} & \textbf{0.790} & 0.169 & \textbf{0.744} & \textbf{0.927} \\
\midrule
\addlinespace[5pt]
\multicolumn{13}{c}{\textbf{\textit{Qwen2.5-32B-Instruct}}} \\
\midrule
Predictive Probability    & 0.783 & 0.198 & 0.706 & 0.886 & 0.881 & 0.107 & 0.799 & 0.957 & 0.781 & 0.201 & 0.732 & 0.892 \\
Verbalized Confidence     & 0.783 & 0.189 & 0.457 & 0.777 & 0.881 & 0.213 & 0.477 & 0.896 & 0.781 & 0.201 & 0.483 & 0.785 \\
Swap-and-Aggregate        & \textbf{0.784} & 0.183 & 0.706 & 0.886 & \textbf{0.894} & 0.090 & 0.768 & 0.956 & \textbf{0.792} & 0.174 & 0.716 & 0.892 \\
BPE (Ours)           & \textbf{0.784} & \textbf{0.172} & \textbf{0.729} & \textbf{0.908} & \textbf{0.894} & \textbf{0.071} & \textbf{0.822} & \textbf{0.975} & \textbf{0.792} & \textbf{0.151} & \textbf{0.742} & \textbf{0.919} \\
\midrule
\addlinespace[5pt]
\multicolumn{13}{c}{\textbf{\textit{Llama-3.1-70B-Instruct}}} \\
\midrule
Predictive Probability    & 0.755 & 0.185 & 0.741 & 0.887 & 0.866 & 0.092 & 0.752 & 0.947 & 0.802 & 0.138 & 0.735 & 0.913 \\
Verbalized Confidence     & 0.755 & \textbf{0.096} & 0.484 & 0.761 & 0.866 & \textbf{0.059} & 0.485 & 0.873 & 0.802 & \textbf{0.073} & 0.472 & 0.809 \\
Swap-and-Aggregate        & \textbf{0.767} & 0.169 & 0.717 & 0.885 & \textbf{0.880} & 0.082 & 0.726 & 0.948 & \textbf{0.810} & 0.129 & 0.720 & 0.913 \\
BPE (Ours)           & \textbf{0.767} & 0.145 & \textbf{0.744} & \textbf{0.894} & \textbf{0.880} & 0.062 & \textbf{0.761} & \textbf{0.957} & \textbf{0.810} & 0.112 & \textbf{0.736} & \textbf{0.919} \\
\bottomrule
\end{tabular}
\end{table*}

\paragraph{Evaluation metrics.}
We evaluate performance across two dimensions: uncertainty estimation quality and statistical validity of risk control. To assess the performance of uncertainty metrics, we report Accuracy, Expected Calibration Error (ECE)~\citep{Naeini2015ObtainingWC}, Area Under the ROC Curve (AUROC), and Area Under the Precision-Recall Curve (AUPRC) (see Appendix~\ref{app:experiment_detail}). 

To validate the selective evaluation framework, we measure the empirical risk (i.e., FDR) on the test set. This value must consistently remain below the target $\alpha$. Finally, we evaluate efficiency via coverage which is defined as the percentage of test queries for which the model returns a prediction rather than abstaining.

\paragraph{Correctness criteria.}
We evaluate the correctness of the LLM judge by comparing its predicted preference against the ground truth human label. For all datasets, a prediction is considered correct if and only if the judge's selected response matches the human-preferred response.

\paragraph{Protocol.}
We utilize a 50/50 split for calibration and test data. To ensure statistical robustness, all reported results are averaged over 1000 independent random splits of the dataset. We evaluate performance across different risk levels $\alpha \in \{0.05, 0.10, 0.15, 0.20, 0.25\}$.

\begin{table}[!ht]
\scriptsize
\caption{Uncertainty estimation quality. BPE outperforms Simulated Annotators (S.A.) in calibration (ECE $\downarrow$) and discrimination (AUROC/AUPRC).}
\label{tab:uncertainty_fixed}
\centering
\resizebox{\columnwidth}{!}{%
\begin{tabular}{ll c c}
\toprule
\textbf{Model / Dataset} & \textbf{Method} & \textbf{ECE\,$\downarrow$} & \textbf{ROC / PRC} \\
\midrule
\multicolumn{4}{l}{\textbf{Qwen2.5-7B-Instruct}} \\
\midrule
\multirow{2}{*}{MT-Bench} & S.A. & 0.174 & 0.594 / 0.790 \\
 & BPE & \textbf{0.143} & \textbf{0.685 / 0.855} \\
\cmidrule(lr){2-4}
\multirow{2}{*}{RewardBench} & S.A. & \textbf{0.103} & 0.689 / 0.886 \\
 & BPE & 0.104 & \textbf{0.744 / 0.926} \\
\cmidrule(lr){2-4}
\multirow{2}{*}{Chatbot Arena} & S.A. & 0.158 & 0.646 / 0.804 \\
 & BPE & \textbf{0.138} & \textbf{0.711 / 0.884} \\
\midrule
\multicolumn{4}{l}{\textbf{Qwen2.5-14B-Instruct}} \\
\midrule
\multirow{2}{*}{MT-Bench} & S.A. & 0.182 & 0.600 / 0.792 \\
 & BPE & \textbf{0.174} & \textbf{0.777 / 0.932} \\
\cmidrule(lr){2-4}
\multirow{2}{*}{RewardBench} & S.A. & 0.130 & 0.627 / 0.880 \\
 & BPE & \textbf{0.103} & \textbf{0.809 / 0.970} \\
\cmidrule(lr){2-4}
\multirow{2}{*}{Chatbot Arena} & S.A. & 0.176 & 0.608 / 0.788 \\
 & BPE & \textbf{0.169} & \textbf{0.744 / 0.927} \\
\bottomrule
\end{tabular}%
}
\end{table}
\section{Results}
\label{sec:results}

\begin{figure*}[t]
    \centering
    \begin{subfigure}[b]{0.32\textwidth}
        \centering
        \includegraphics[width=\linewidth, trim={1.5cm 1.5cm 0cm 1.5cm}, clip]{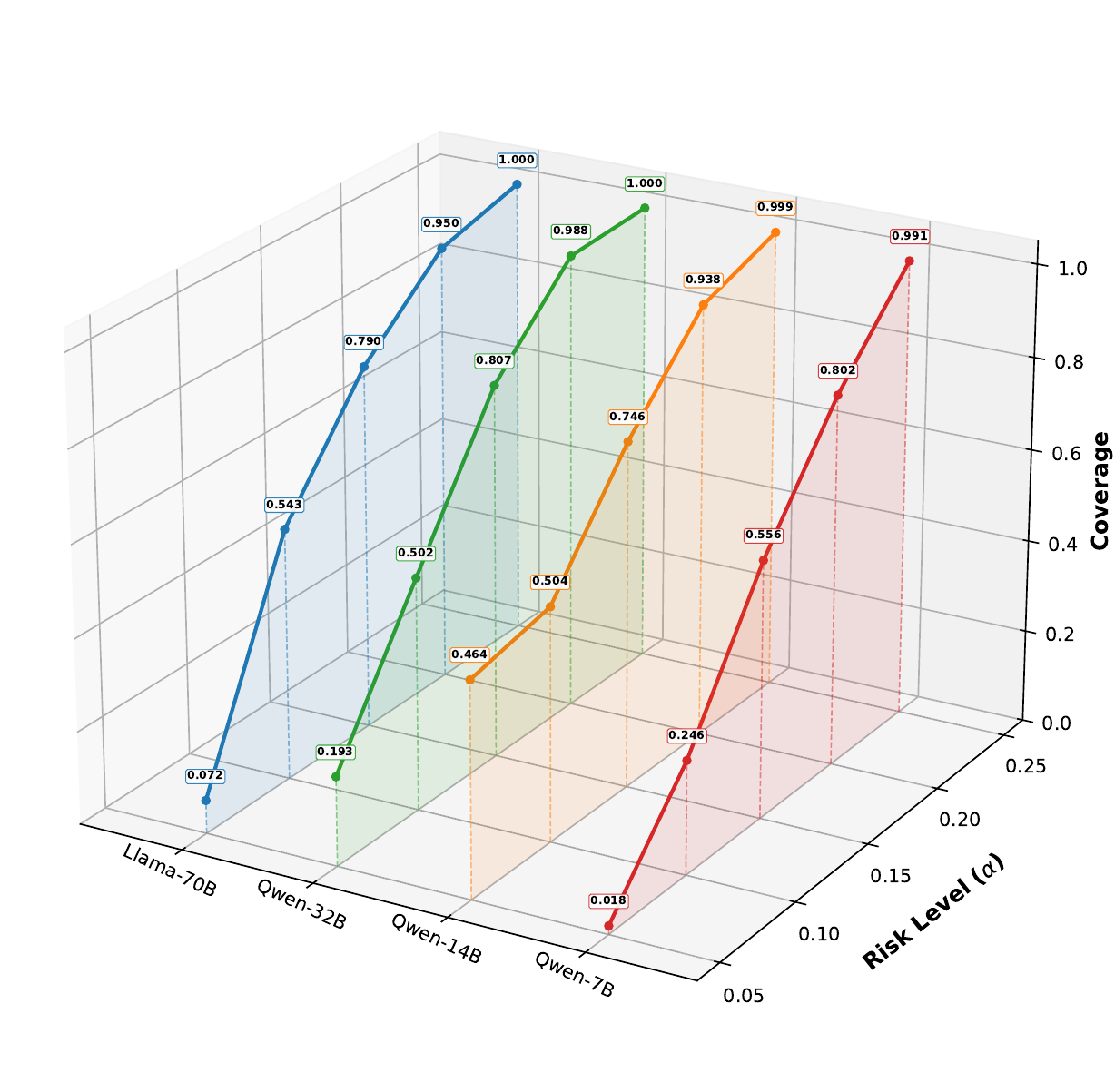}
        \caption{MT-Bench}
        \label{fig:validity_mtbench}
    \end{subfigure}
    \hfill
    \begin{subfigure}[b]{0.32\textwidth}
        \centering
        \includegraphics[width=\linewidth, trim={1.5cm 1.5cm 0cm 1.5cm}, clip]{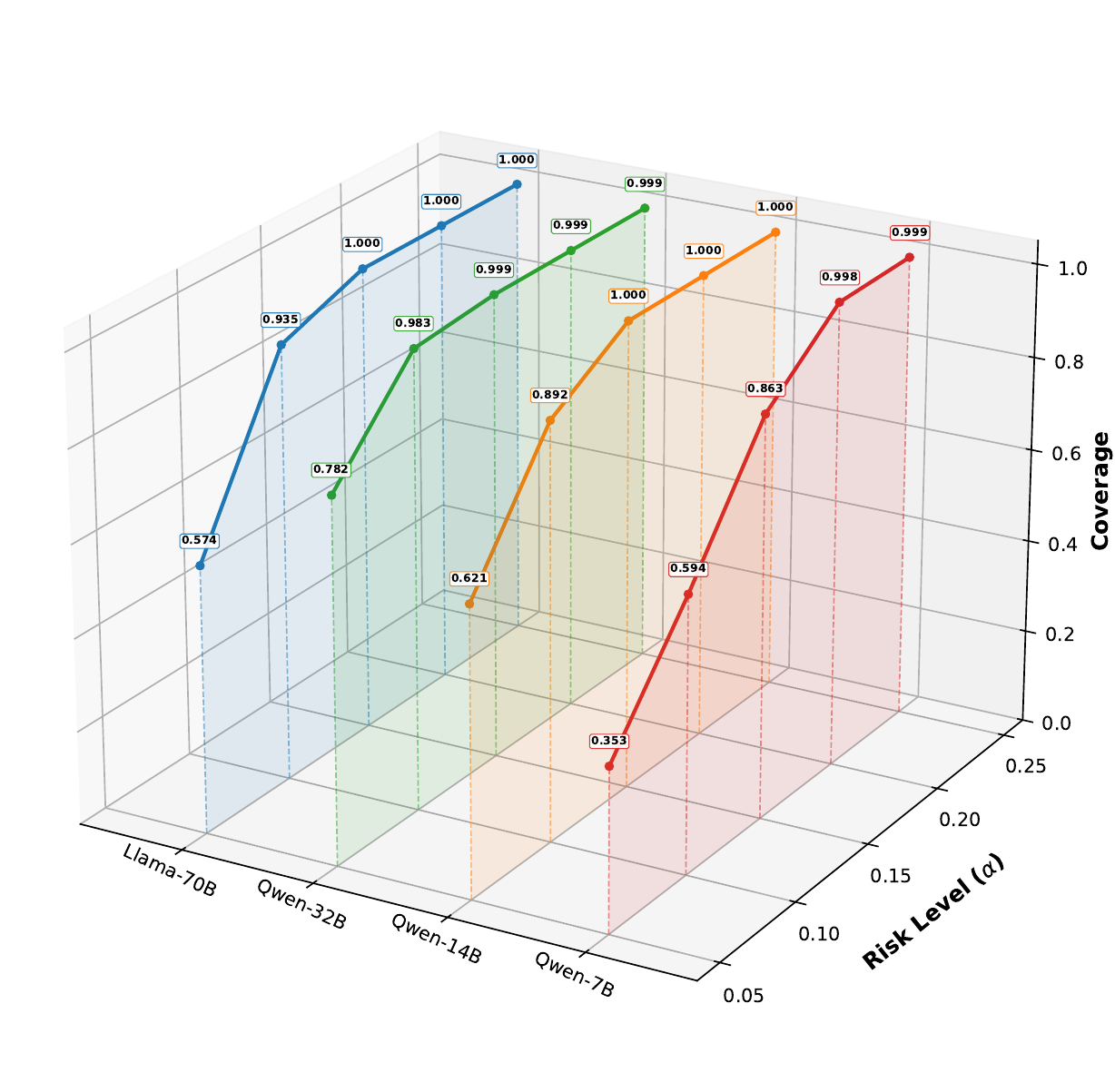}
        \caption{RewardBench}
        \label{fig:validity_rewardbench}
    \end{subfigure}
    \hfill
    \begin{subfigure}[b]{0.32\textwidth}
        \centering
        \includegraphics[width=\linewidth, trim={1.5cm 1.5cm 0cm 1.5cm}, clip]{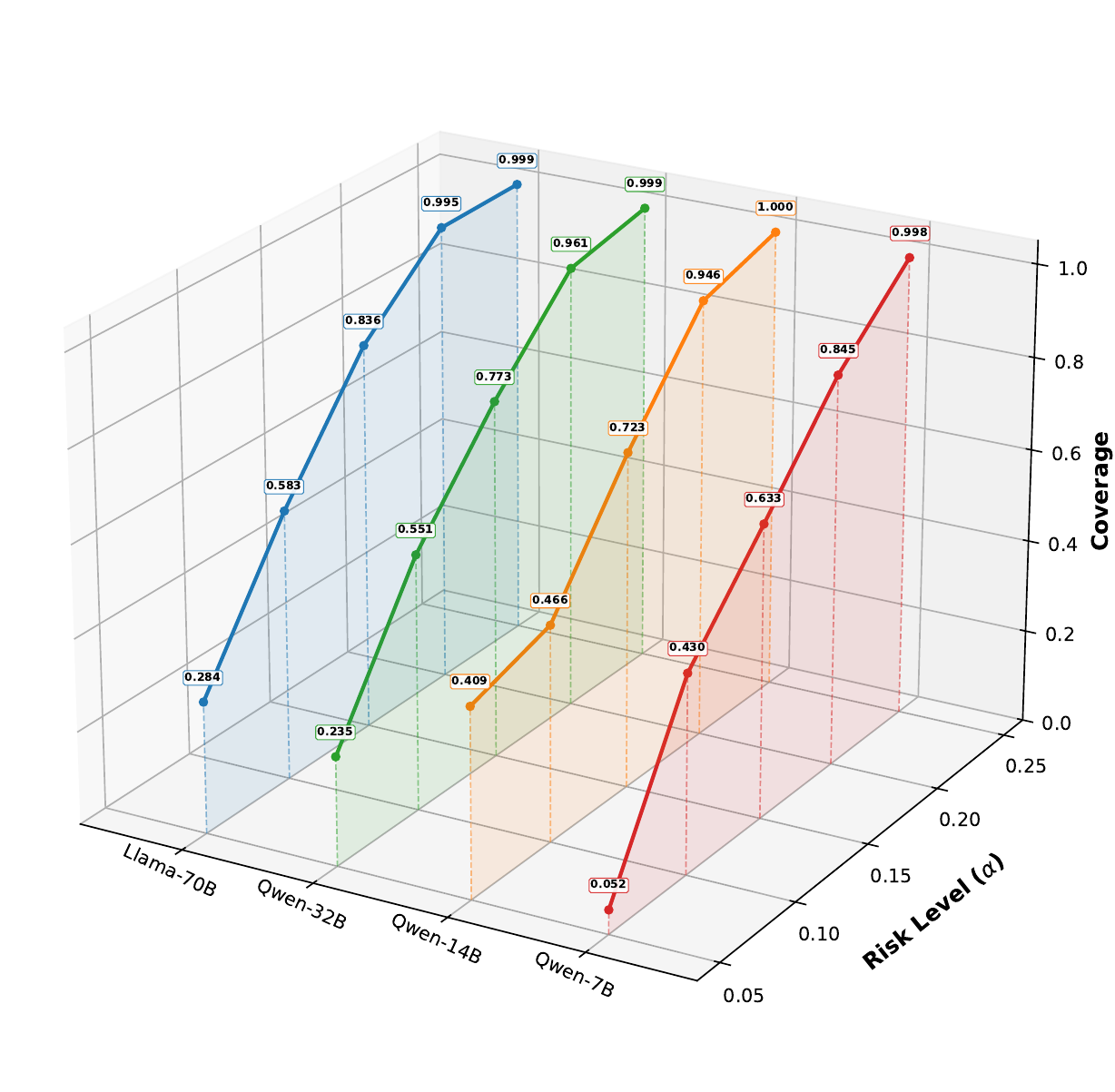}
        \caption{Chatbot Arena}
        \label{fig:validity_arena}
    \end{subfigure}
    \caption{Coverage vs.\ target risk level $\alpha$ for \textsc{Scope}. Coverage increases as the risk budget is relaxed, and larger judges sustain higher coverage at strict tolerances.}
    \label{fig:cov_curtin_three_benchmarks}
\end{figure*}

\begin{table*}[!ht]
\caption{Risk control with \textsc{Scope}. Coverage and empirical risk at $\alpha = 0.10$, averaged over 1,000 splits. Values exceeding the risk bound ($>0.10$) indicate failure. \textsc{Scope} consistently satisfies the risk bound while maximizing coverage.}
\label{tab:risk_coverage_comparison}
\centering
\footnotesize
\setlength{\tabcolsep}{2pt}
\begin{tabular*}{\linewidth}{@{\extracolsep{\fill}} l cc cc cc cc}
\toprule
& \multicolumn{2}{c}{\textbf{Qwen-7B}} 
& \multicolumn{2}{c}{\textbf{Qwen-14B}} 
& \multicolumn{2}{c}{\textbf{Qwen-32B}} 
& \multicolumn{2}{c}{\textbf{Llama-70B}} \\
\cmidrule(lr){2-3} \cmidrule(lr){4-5} \cmidrule(lr){6-7} \cmidrule(lr){8-9}
& Coverage & Risk & Coverage & Risk & Coverage & Risk & Coverage & Risk \\
\midrule
\multicolumn{9}{c}{\textit{\textbf{MT-Bench}}} \\
\midrule
Vanilla         & 1.000 & 0.269 & 1.000 & 0.248 & 1.000 & 0.217 & 1.000 & 0.245 \\
Heuristic       & 0.907 & 0.251 & 0.953 & 0.231 & 0.933 & 0.200 & 0.809 & 0.184 \\
Na\"ive         & 0.102 & 0.116 & 0.566 & 0.124 & 0.426 & 0.103 & 0.396 & 0.102 \\
\textsc{Scope}  & \textbf{0.246} & \textbf{0.097} & \textbf{0.504} & \textbf{0.098} & \textbf{0.502} & \textbf{0.099} & \textbf{0.543} & \textbf{0.098} \\
\midrule
\multicolumn{9}{c}{\textit{\textbf{RewardBench}}} \\
\midrule
Vanilla         & 1.000 & 0.243 & 1.000 & 0.150 & 1.000 & 0.120 & 1.000 & 0.134 \\
Heuristic       & 0.894 & 0.211 & 0.968 & 0.140 & 0.964 & 0.103 & 0.845 & 0.094 \\
Na\"ive         & 0.430 & 0.101 & 0.827 & 0.101 & 0.957 & 0.101 & 0.872 & 0.101 \\
\textsc{Scope}  & \textbf{0.594} & \textbf{0.099} & \textbf{0.892} & \textbf{0.099} & \textbf{0.983} & \textbf{0.098} & \textbf{0.935} & \textbf{0.099} \\
\midrule
\multicolumn{9}{c}{\textit{\textbf{Chatbot Arena}}} \\
\midrule
Vanilla         & 1.000 & 0.249 & 1.000 & 0.221 & 1.000 & 0.219 & 1.000 & 0.198 \\
Heuristic       & 0.903 & 0.224 & 0.958 & 0.204 & 0.937 & 0.201 & 0.815 & 0.145 \\
Na\"ive         & 0.297 & 0.103 & 0.530 & 0.114 & 0.532 & 0.102 & 0.565 & 0.101 \\
\textsc{Scope}  & \textbf{0.430} & \textbf{0.099} & \textbf{0.466} & \textbf{0.097} & \textbf{0.551} & \textbf{0.099} & \textbf{0.583} & \textbf{0.099} \\
\bottomrule
\end{tabular*}
\end{table*}

\subsection{Uncertainty Estimation Quality}
\label{subsec:res_uncertainty}

The prerequisite for valid risk control is a scoring function $s(x)$ that effectively ranks judgments by their probability of error. In Table~\ref{tab:uncertainty_baselines}, we compare our proposed $s(x)$ against commonly used methods. Across all benchmarks, BPE mostly achieves the highest AUROC and AUPRC, demonstrating capability in distinguishing correct judgments from errors. While we randomize the preference order for all methods to mitigate aggregate position bias, unidirectional metrics such as predictive probability and verbalized confidence remain vulnerable to instance-level overconfidence. BPE also yields lower ECE than predictive probability and verbalized confidence across most configurations, indicating better-calibrated confidence estimates. 

We note that the four baselines in Table~\ref{tab:uncertainty_baselines} naturally split into two pairs that share a prediction rule. Predictive probability and Verbalized confidence both use the greedy A/B token from a single zero-shot judge call and differ only in how confidence is computed (max softmax probability versus a separately elicited verbalized score); they therefore report identical accuracy in every row. Swap-and-Aggregate and BPE both aggregate the forward and reverse judge calls and therefore also share accuracy with each other, while differing in how the confidence is computed: S\&A uses discrete agreement voting, BPE averages the forward/reverse probabilities and applies binary entropy. Within this bidirectional pair, BPE strictly improves over S\&A on ECE, AUROC, and AUPRC in every model--dataset cell, indicating that probability-level aggregation produces a richer uncertainty signal than collapsing each pass to a discrete vote.

Table~\ref{tab:uncertainty_fixed} compares BPE against simulated annotators, a strong baseline that estimates uncertainty through multi-persona agreement. Unlike simulated annotators, which require multiple model generations per instance and are therefore costly to deploy at scale, BPE provides an efficient alternative that achieves stronger uncertainty ranking with only two forward passes. Despite requiring only a single bidirectional probability computation, BPE consistently matches or improves calibration and achieves substantially stronger discrimination across benchmarks. In particular, BPE yields large gains in AUROC and AUPRC over simulated annotators on both MT-Bench and Chatbot Arena (e.g., AUROC improving from $0.59$ to $0.69$ for Qwen-7B, and from $0.60$ to $0.78$ for Qwen-14B), indicating a more reliable ranking of error-prone judgments. While simulated annotators can provide competitive calibration in some cases (e.g., RewardBench with Qwen-7B), it remains weaker in separating correct decisions from failures.

\subsection{\textsc{Scope}: Statistical Validity and Coverage}
\label{subsec:res_scope}

We evaluate whether \textsc{Scope} enables selective pairwise evaluation at a user-specified target risk level, i.e., whether the empirical accepted-set error remains below $\alpha$ while retaining as much coverage as possible.

\paragraph{Baselines violate the risk constraint.}
As depicted in Table~\ref{tab:risk_coverage_comparison}, across given benchmarks, Vanilla prediction (i.e., no abstention) achieves full coverage but substantially exceeds the risk budget (e.g., MT-Bench risk ranges from $0.217$--$0.269$ across models at $\alpha=0.10$). Similarly, Heuristic thresholding retains high coverage (e.g., $0.809$--$0.958$) yet still violates the constraint in most setting (e.g., MT-Bench risk $0.184$--$0.251$). This gap highlights that raw confidence scores are not reliably calibrated for selective pairwise judging. The Na\"ive baseline, which tunes a threshold to match the empirical mean on the calibration split, often operates near the boundary and can fail under finite samples: it exceeds $\alpha$ on MT-Bench for Qwen-7B ($0.116$) and Qwen-14B ($0.124$), and on Chatbot Arena for Qwen-14B ($0.114$). Even when Na\"ive remains below the bound in some places, it frequently does so by affecting the coverage.

\paragraph{\textsc{Scope} maintains valid risk control while preserving coverage.}
Figure~\ref{fig:cov_curtin_three_benchmarks} shows that \textsc{Scope} reliably tracks the target line across $\alpha \in \{0.05,0.10,0.15,0.20,0.25\}$, keeping empirical risk below $\alpha$ for every dataset and model. At $\alpha=0.10$ in Table~\ref{tab:risk_coverage_comparison}, \textsc{Scope} achieves risks tightly concentrated around the target (typically $0.097$--$0.099$) while delivering substantially higher coverage than Na\"ive on challenging benchmarks. For MT-Bench, \textsc{Scope} more than doubles coverage for Qwen-7B ($0.246$ vs.\ $0.102$) and improves coverage for the larger judges as well, all while staying below the $0.10$ bound. Overall, \textsc{Scope} converts uncertainty estimates into selective judgments that meet the desired risk level, and it does so with higher coverage than empirical thresholding.

\paragraph{Coverage scales smoothly with the risk budget and model strength.} The coverage-risk curves further illustrate how \textsc{Scope} trades off utility against strictness. As the budget relaxes, coverage increases rapidly and approaches full evaluation for stronger judges. For MT-Bench, \textsc{Scope} increases coverage from $(0.018,0.246,0.556,0.802,0.991)$ for Qwen-7B to $(0.072,0.543,0.790,0.950,1.000)$ for Llama-70B as $\alpha$ ranges from $0.05$ to $0.25$ (Figure~\ref{fig:validity_mtbench}). On Chatbot Arena, the same trend holds, with Qwen-7B moving from $0.052$ coverage at $\alpha=0.05$ to $0.998$ at $\alpha=0.25$, and Llama-70B achieving high coverage even at strict levels (Figure~\ref{fig:validity_arena}). This monotone behavior at the system level is notable given that feasibility of Eq.~\ref{eq:scope_condition} need not be monotone in $\lambda$; empirically, the largest-feasible-threshold rule in Eq.~\ref{eq:scope_opt} yields stable and interpretable tradeoffs.

\paragraph{Stability across random splits.}
Figure~\ref{fig:combined_validity} reveals how sensitive selective evaluation is to the particular calibration/test split. The shaded bands ($\pm1\sigma$) are consistently widest for the weakest judge (Qwen-7B), indicating higher variance in accepted-set error across random splits, whereas stronger judges such as Qwen-32B show much tighter bands and more stable behavior. This split-to-split variability is most pronounced at larger $\alpha$, where higher coverage admits more borderline instances and the resulting error rate depends more on which examples fall into the calibration set. Importantly, even in these higher-variance regimes, the mean risk curves remain stable and track the target closely, suggesting that \textsc{Scope} is not maintaining validity by being overly conservative; instead, it yields a predictable risk profile even when calibration noise is non-negligible. In other words, \textsc{Scope} closely tracks the target risk level $\alpha$, fully utilizing the user-specified risk budget to maximize coverage rather than abstaining too conservatively.

\begin{figure*}[t]
    \centering
    \includegraphics[width=\textwidth]{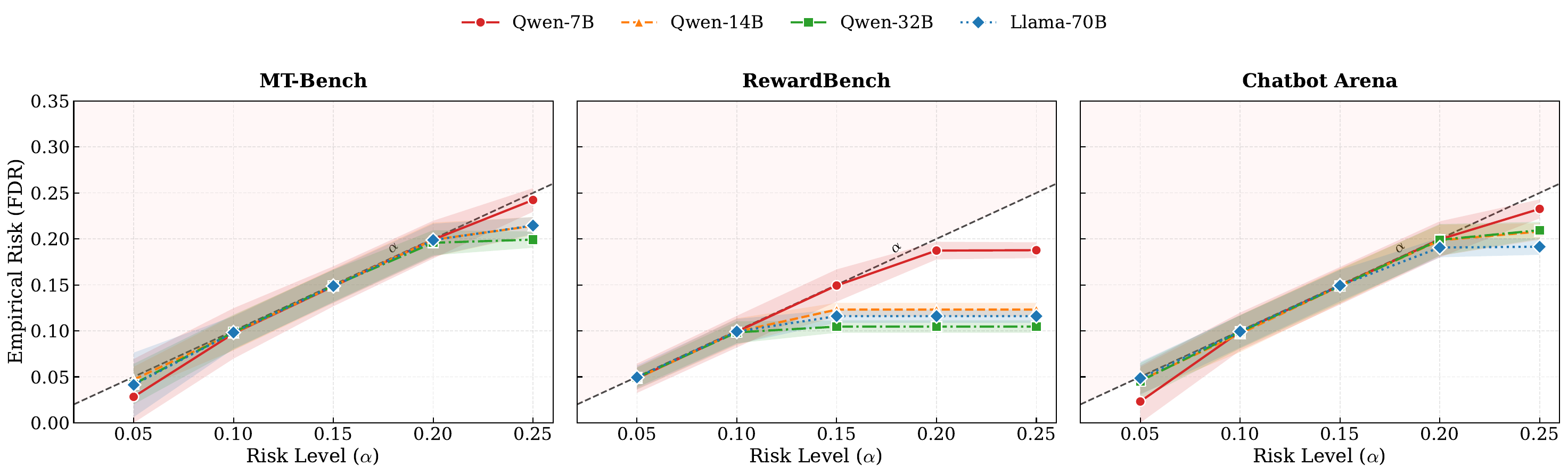}
    \caption{Statistical validity of \textsc{Scope} across benchmarks. We report the empirical risk (FDR) against the user-specified target risk level $\alpha$. The dashed diagonal line ($y=x$) indicates the theoretical safety limit; curves remaining below this boundary demonstrate valid risk control. Solid lines represent the mean risk over $1000$ trials, while shaded regions denote the standard deviation ($\pm 1\sigma$). \textsc{Scope} consistently satisfies the risk constraint across judges and tasks.
    }
    \label{fig:combined_validity}
\end{figure*}

\section{Related Work}
\label{sec:related_work}
\textbf{LLM-as-a-judge.}
The high cost of human annotation has driven the adoption of LLMs as scalable surrogates for evaluation~\citep{NEURIPS2023_91f18a12, chiang2023vicuna}. This paradigm is now widespread in both benchmark-style leaderboards (e.g., MT-Bench) and live preference platforms (e.g., Chatbot Arena)~\citep{10.5555/3692070.3692401}. Beyond pairwise win-rates, recent work also uses LLM judges as general-purpose reference-free metrics for generation quality, often decomposing evaluation into criteria with chain-of-thought or scoring templates (e.g., G-Eval)~\citep{liu-etal-2023-g}. However, a growing body of evidence shows that LLM judges can be systematically unreliable. Documented failure modes include position bias~\citep{wang-etal-2024, shi-etal-2025-judging}, verbosity/length bias~\citep{NEURIPS2023_91f18a12, saito2023verbositybias}, and self-preference or familiarity biases that favor low-perplexity outputs or the judge’s own stylistic priors~\citep{NEURIPS2023_91f18a12, NEURIPS2024_7f1f0218}. These issues can distort model rankings and incentivize ``judge gaming,'' especially when benchmarks are optimized against a fixed judge. While mitigation techniques such as swapping positions~\citep{NEURIPS2023_91f18a12}, chain-of-thought prompting~\citep{10.5555/3600270.3602070}, debate-style judging~\citep{chan2024chateval}, and ensembling or multi-judge aggregation~\citep{badshah-etal-2025-clev, badshah-sajjad-2025-reference} improve empirical agreement, they remain heuristic and do not provide formal reliability guarantees. This naturally shifts attention to uncertainty estimation: an LLM judge should assess when its prediction is likely to be correct and abstain on instances where uncertainty is high.

\textbf{Uncertainty estimation for LLM judges.}
Several recent works study how to quantify uncertainty in LLM-based evaluations. \citet{xie2025an} conduct a large empirical study of uncertainty in model-based LLM evaluation including pairwise comparison using token probabilities as a proxy for an evaluator’s internal confidence and showing that evaluation confidence varies across model families/sizes and is sensitive to distribution shift. Complementarily, \citet{yang2024verbalizedconfidencescoresllms} study verbalized confidence scores, analyzing when self-reported confidence can be reliable and how prompt formulations strongly affect calibration. Most closely related to our baselines, \citet{jung2025trust} introduce simulated annotators, where multiple pairwise annotations are sampled and agreement is used as a confidence proxy. Another closely related line of work mitigates position bias by re-prompting the judge under swapped or repeated conditions and aggregating decisions, including swap-agreement protocols~\citep{NEURIPS2023_91f18a12, wang2025trustjudge, tripathi2025pairwis} and majority-vote self-consistency~\citep{002WSLCNCZ23}; BPE differs in that it produces a continuous probability-level uncertainty score from just two deterministic passes, rather than a discrete vote over many samples. These lines of work motivate the baseline uncertainty signals used in our experiments. However, without a formal reliability guarantees, these confidence metrics remain heuristic proxies: improvements in calibration or discrimination do not translate into finite-sample, distribution-free guarantees that the error rate among accepted judgments is controlled at a target risk level.

\textbf{Conformal prediction and risk control.}
Motivated by the above reliability gaps and the lack of formal guarantees, a natural direction is to replace heuristic ``confidence'' with finite-sample valid uncertainty quantification. Conformal prediction provides a distribution-free framework that converts any scoring function into statistically valid sets via calibration~\citep{vovk2005algorithmic, angelopoulos2023gentle}. While classical conformal methods target marginal coverage, recent advances generalize to risk control: instead of guaranteeing set coverage, they guarantee that an error rate on the accepted set is below a user-specified level with high probability~\citep{bates2021distribution, angelopoulos2024conformal}. This has enabled selective prediction with rigorous guarantees, where a model abstain to ensure reliability on the non-abstained instances, and more broadly supports controlling discovery-style errors such as false discoveries among accepted decisions~\citep{bates2021distribution}. In the LLM setting, selective prediction have been adapted to reliability in QA and generation, including calibrated abstention and hallucination detection~\citep{gui2024conformal, niu2024mitigating, wang2025coin}. 

Our work builds on this line by adapting conformal risk control~\citep{angelopoulos2024conformal} and selective prediction methods such as linear expectation theory~\citep{wang2025lec} to pairwise LLM evaluation, where the objective is not coverage over labels but guaranteeing that the error rate among accepted judgments remains below a target risk level. Prior work typically relies on unidirectional confidence proxies~\citep{wang2025coin}, such as maximum softmax probability or sample consistency which can be systematically misaligned with true judging errors due to positional and preference biases~\citep{wang-etal-2025-sconu}. We complement conformal risk control with a bias-neutral uncertainty estimator that aggregates preferences under both response positions. This combination enables \textsc{Scope} to move beyond heuristic abstention and provides a principled framework for reliable pairwise evaluation, where statistical guarantees are paired with an uncertainty signal tailored to the known failure modes of LLM-based judges.

\section{Limitations}
\label{sec:limitations}
While \textsc{SCOPE} provides a statistically grounded framework for reliable pairwise LLM judging, several limitations remain.
First, our guarantees rely on the standard exchangeability assumption between the calibration set and future evaluation queries. In practice, distribution shifts across benchmarks, prompt variations, or strategic model behaviors may weaken the validity of selective guarantees~\citep{tibshirani2020conformalpredictioncovariateshift, joshi2025conformal}. Second, \textsc{BPE} requires bidirectional evaluation of each comparison, incurring approximately two forward passes per instance and thus modest computational overhead relative to single-shot confidence heuristics. Moreover, \textsc{BPE} is a white-box uncertainty measure that relies on access to model probabilities or logits, and may not be directly applicable in fully black-box or API-only evaluator settings without approximation. 
Finally, our formulation focuses on pairwise judging; extending selective guarantees to point-wise evaluation is an important direction for future work. 


\section{Conclusion}
As LLMs increasingly serve as judges in many applications, ensuring their reliability is paramount. In this work, we presented \textsc{Scope}, a framework that provides rigorous statistical guarantees for LLM-based pairwise evaluation by neutralizing judges' preference bias via our proposed BPE uncertainty estimator and enabling selective evaluation at a user-specified target risk level. Across multiple benchmarks and model scales, BPE improves calibration and discrimination, and \textsc{Scope} uses these signals to accept more judgments while meeting the desired risk level, suggesting that combining bias-neutral uncertainty estimation with conformal risk control provides a promising foundation for trustworthy automated evaluation at scale. Crucially, this calibration deploys as a drop-in layer on top of any existing judge without retraining, exposing a single user-tunable knob $\alpha$ that trades coverage against the certified error rate. Looking forward, extending selective guarantees beyond binary pairwise settings to richer paradigms such as multi-response ranking, rubric-based scoring, or interactive critique would broaden applicability, while adapting BPE to fully black-box evaluator settings remains an important challenge for real-world deployment. 
\clearpage

\section*{Impact Statement}
This work advances the reliability of automated model evaluation by introducing a statistically grounded framework for LLM-as-a-judge. To our knowledge, \textsc{Scope} is the first framework to provide finite-sample false-discovery-rate control for single-model pairwise LLM judging, and it operates as a drop-in calibration layer on top of any existing judge including specialized, fine-tuned critics without retraining or access to model internals beyond preference-token probabilities. By shifting pairwise judging from heuristic confidence scores to formal risk control, \textsc{Scope} enables researchers and practitioners to deploy scalable, low-cost evaluators without sacrificing trustworthiness, and turns LLM judging from an opaque black box into a calibrated system with known, user-tunable reliability properties. Furthermore, the proposed BPE metric actively mitigates position bias, promoting fairer comparisons between models. As LLMs increasingly serve as supervisors for alignment and reinforcement learning, ensuring their judgments come with finite-sample guarantees is a critical step toward accountable and transparent AI development.

\section*{Acknowledgements}
We acknowledge the support and funding of CIFAR, the Natural Sciences and Engineering Research Council of Canada
(NSERC), the Canada Foundation for Innovation (CFI),
and Research Nova Scotia. Advanced computing resources
are provided by ACENET, the regional partner in Atlantic
Canada, and the Digital Research Alliance of Canada.

\bibliography{my_icml}
\bibliographystyle{icml2026}
\appendix
\onecolumn

\section{Proofs of Validity}
\label{app:proofs}

\subsection{Proof of Theorem~\ref{thm:scope_validity}}

\textbf{Theorem~\ref{thm:scope_validity} (restated).}
\textit{Let calibration and test samples be exchangeable. For any $\alpha\in(0,1)$,
the threshold $\hat{\lambda}$ defined in Eq.~\ref{eq:scope_opt} guarantees that the
test-time conditional error among accepted predictions satisfies}
\begin{equation}
    \frac{\mathbb{E}\!\left[E(x_{n+1})\,S(x_{n+1},\hat{\lambda})\right]}
         {\mathbb{E}\!\left[S(x_{n+1},\hat{\lambda})\right]}
    \le \alpha,
\end{equation}
\textit{where the expectation is over the joint randomness of the calibration set
and the test sample.}

\begin{proof}
Let $Z_i=(x_i,y_i^*)$ denote a labeled example. For any threshold $\lambda$, recall
the selection indicator $S(x,\lambda)=\mathbb{I}(s(x)\le \lambda)$ and the error
indicator $E(x)=\mathbb{I}(\hat{y}\neq y^*)$. Define the joint indicator
\begin{equation}
    Z(x,\lambda) \triangleq S(x,\lambda)\,E(x)\in\{0,1\},
\end{equation}
and the \emph{linearized loss} (as in Eq.~\ref{eq:scope_condition})
\begin{equation}
    L(Z,\lambda)\triangleq S(x,\lambda)\,(E(x)-\alpha)
    \;=\; Z(x,\lambda)-\alpha S(x,\lambda).
\end{equation}
Whenever $\mathbb{E}[S(x_{n+1},\hat{\lambda})]>0$, the test-time conditional error
rate (our ``FDR'') can be written as
\begin{equation}
    \Pr\!\big(E(x_{n+1})=1 \mid S(x_{n+1},\hat{\lambda})=1\big)
    \;=\;
    \frac{\mathbb{E}\!\left[Z(x_{n+1},\hat{\lambda})\right]}
         {\mathbb{E}\!\left[S(x_{n+1},\hat{\lambda})\right]}.
    \label{eq:fdr_ratio_in_proof}
\end{equation}
Thus it suffices to show
\begin{equation}
    \mathbb{E}\!\left[L(Z_{n+1},\hat{\lambda})\right]
    =\mathbb{E}\!\left[Z(x_{n+1},\hat{\lambda})-\alpha S(x_{n+1},\hat{\lambda})\right]
    \le 0,
    \label{eq:suffices}
\end{equation}
because then rearranging yields
$\mathbb{E}[Z(x_{n+1},\hat{\lambda})]/\mathbb{E}[S(x_{n+1},\hat{\lambda})]\le \alpha$,
which combined with Eq.~\ref{eq:fdr_ratio_in_proof} proves the claim.

\paragraph{Key exchangeability step.}
Consider the joint sample $(Z_1,\dots,Z_n,Z_{n+1})$, where $Z_{1:n}$ are the
calibration examples and $Z_{n+1}$ is the test example. By assumption, these
$n+1$ examples are exchangeable. The calibrated threshold $\hat{\lambda}$ is a
(measurable) symmetric function of the calibration set $Z_{1:n}$ via Eq.~\ref{eq:scope_opt}.
This is the standard split-conformal swap argument: since $\hat{\lambda}$ depends on $Z_{1:n}$ only through its (unordered) empirical distribution, exchangeability of $(Z_1,\dots,Z_{n+1})$ implies that, in expectation, any of the $n{+}1$ points can play the role of the test point. Consequently, the expectation of the linearized loss on the test point equals the expected average of the linearized loss over all $n{+}1$ points evaluated at the same (random) threshold $\hat{\lambda}$:
\begin{equation}
    \mathbb{E}\!\left[L(Z_{n+1},\hat{\lambda})\right]
    \;=\;
    \mathbb{E}\!\left[\frac{1}{n+1}\sum_{i=1}^{n+1} L(Z_i,\hat{\lambda})\right].
    \label{eq:exchangeability_average}
\end{equation}
This is the same identity used in the single-model LEC proof; see \citet{angelopoulos2023gentle} for the foundational split-conformal exchangeability argument and \citet{wang2025lec}, App.~A.1, Eq.~(13) for the analogous step.

\paragraph{Use the calibration constraint.}
By construction of $\hat{\lambda}$ (Eq.~\ref{eq:scope_opt}), the calibration sum
satisfies the finite-sample sufficient condition (Eq.~\ref{eq:scope_condition}):
\begin{equation}
    \sum_{i=1}^{n} L(Z_i,\hat{\lambda}) \le -1.
    \label{eq:cal_constraint_in_proof}
\end{equation}
Substituting Eq.~\ref{eq:cal_constraint_in_proof} into
Eq.~\ref{eq:exchangeability_average} yields
\begin{equation}
    \mathbb{E}\!\left[L(Z_{n+1},\hat{\lambda})\right]
    \;\le\;
    \mathbb{E}\!\left[\frac{-1 + L(Z_{n+1},\hat{\lambda})}{n+1}\right].
    \label{eq:after_subst}
\end{equation}

\paragraph{Bound the linearized loss.}
Finally, note that since $S\in\{0,1\}$ and $E\in\{0,1\}$, the quantity
\[
L(Z,\lambda)=S(x,\lambda)(E(x)-\alpha)=Z(x,\lambda)-\alpha S(x,\lambda)
\]
is always strictly less than $1$. Indeed, the maximum occurs when $S=1$ and $E=1$,
in which case $L=1-\alpha<1$ (since $\alpha>0$). Therefore,
\begin{equation}
    -1 + L(Z_{n+1},\hat{\lambda})
    \le -1 + (1-\alpha)
    = -\alpha < 0.
\end{equation}
Substituting into Eq.~\ref{eq:after_subst} yields
\[
\mathbb{E}[L(Z_{n+1},\hat{\lambda})]\le -\frac{\alpha}{n+1}<0,
\]
which completes the argument.

\paragraph{Degenerate case.}
If $\mathbb{E}[S(x_{n+1},\hat{\lambda})]=0$, then the judge abstains almost surely
under $\hat{\lambda}$, and the risk constraint is trivially satisfied (no accepted
predictions). This completes the proof.
\end{proof}


\section{Experimental Details}
\label{app:experiment_detail}

\subsection{Bidirectional Preference Entropy (BPE)}
\label{app:bpe_detail}

We utilize BPE in two distinct forms depending on the downstream application: as an uncertainty score for risk control, and as a confidence score for evaluation benchmarking.

\textbf{1. BPE as uncertainty (for \textsc{Scope} calibration).}
The core of our risk control framework requires a score $s(x)$ where lower values indicate safety and higher values indicate risk. We define this as the binary entropy (in nats) of the bias-neutralized probability $\bar{p}$:
\begin{equation*}
    s(x) = \text{Entropy}(\bar{p}) = -\left[ \bar{p} \ln \bar{p} + (1-\bar{p}) \ln (1-\bar{p}) \right].
\end{equation*}
For example, if $\bar{p}=0.95$, then $s(x)=-(0.95\ln 0.95+0.05\ln 0.05)\approx 0.20$, indicating low uncertainty.

This formulation ensures that when the model is maximally uncertain ($\bar{p}=0.5$), the score is maximized ($s(x) \approx 0.693$), triggering abstention during the calibration process.

\textbf{2. BPE as confidence (for metrics).}
Standard evaluation metrics such as AUROC and ECE are designed to evaluate \textit{confidence} scores. To benchmark BPE against baselines like predictive probability, we convert the entropy back into a probability-scale confidence score:
\begin{equation*}
    c_{\text{BPE}}(x) = \max(\bar{p}, 1-\bar{p}).
\end{equation*}
This maps the uncertainty range $[0.693, 0.0]$ to the confidence range $[0.5, 1.0]$. A score of $1.0$ indicates absolute certainty (low entropy), while $0.5$ indicates a random guess (max entropy). This transformation preserves the rank-ordering of samples, ensuring that AUROC and AUPRC values remain valid comparisons.

\subsection{Evaluation Metrics}
We employ four standard metrics to evaluate the quality of uncertainty estimation:

\textbf{Accuracy (Acc).} The raw proportion of instances where the judge's selected response ($\hat{y}$) matches the ground truth preference ($y^*$):
\begin{equation*}
    \text{Acc} = \frac{1}{N} \sum_{i=1}^N \mathbb{I}(\hat{y}_i = y^*_i).
\end{equation*}

\textbf{Expected Calibration Error (ECE)~\citep{Naeini2015ObtainingWC}.} Measures the alignment between the model's confidence and its actual accuracy. We bin samples by their confidence score $c(x)$ into $M=10$ uniform bins. For each bin $B_m$, we compute the average confidence and average accuracy:
\begin{equation*}
    \text{ECE} = \sum_{m=1}^M \frac{|B_m|}{N} \left| \text{acc}(B_m) - \text{conf}(B_m) \right|.
\end{equation*}
Lower ECE indicates that the model's confidence effectively predicts its probability of being correct.

\textbf{Area Under the ROC Curve (AUROC).} Measures the ability of the confidence score to distinguish between correct and incorrect predictions, independent of the decision threshold. An AUROC of 1.0 indicates perfect discrimination, while 0.5 indicates random guessing.

\textbf{Area Under the Precision-Recall Curve (AUPRC).} Measures the trade-off between precision (correctness) and recall (coverage) as the confidence threshold varies. This is particularly important for selective prediction, as it directly reflects the system's ability to maintain high accuracy at high coverage levels.

\subsection{Implementation Details}
To ensure reproducibility, we detail the prompting formats, logit extraction methods, and baseline configurations used in our experiments.

\subsubsection{Prompt templates}
We adopt the standard pairwise evaluation template widely used in MT-Bench and Chatbot Arena~\citep{NEURIPS2023_91f18a12}. The model is instructed to act as an impartial judge and output only the label of the preferred response.

For BPE, we generate the reverse input $x_{\text{rev}}$ by swapping \{response\_A\} and \{response\_B\} in the prompt.

\begin{figure}[t]
\centering
\begin{tcolorbox}[
    enhanced,
    colback=gray!4!white,
    colframe=gray!70!black,
    coltitle=white,
    colbacktitle=gray!70!black,
    title=\textbf{Prompt Template for Pairwise Evaluation},
    fonttitle=\small,
    boxrule=0.7pt,
    arc=2mm,
    left=8pt, right=8pt, top=6pt, bottom=6pt,
    boxsep=2pt,
]
\footnotesize
{\color{gray!60!black}\textbf{\textsf{[System]}}}\\[2pt]
Please act as an impartial judge and evaluate the quality of the responses provided by two AI assistants to the user question displayed below. You should choose the assistant that follows the user's instructions and answers the user's question better. Your evaluation should consider factors such as the helpfulness, relevance, accuracy, depth, creativity, and level of detail of their responses. Avoid any position biases and ensure that the order in which the responses were presented does not influence your decision. Do not allow the length of the responses to influence your evaluation. Do not favor certain names of the assistants. Be as objective as possible. Output your final verdict by strictly following this format: \texttt{"[[A]]"} if assistant A is better, \texttt{"[[B]]"} if assistant B is better.

\vspace{4pt}
\rule{\linewidth}{0.3pt}
\vspace{4pt}

{\color{gray!60!black}\textbf{\textsf{[User Question]}}}\\[2pt]
\texttt{\{question\}}

\vspace{6pt}
{\color{gray!60!black}\textbf{\textsf{[Assistant A's Answer]}}}\\[2pt]
\texttt{\{answer\_a\}}

\vspace{6pt}
{\color{gray!60!black}\textbf{\textsf{[Assistant B's Answer]}}}\\[2pt]
\texttt{\{answer\_b\}}
\end{tcolorbox}
\caption{Pairwise evaluation prompt. The system instruction used for all judge models. Instructions requesting an explanation or reasoning trace were removed to enable direct logit extraction (or greedy decoding) of the preference token.}
\label{fig:prompt_template}
\end{figure}

\subsubsection{Logit extraction}
For all models, we compute the uncertainty deterministically. We decode the first token of the response and extract the raw logits corresponding to the tokenizer IDs for ``A'' and ``B''. We apply the softmax function exclusively over these two logits to obtain the normalized preference probabilities. This isolates the preference decision from the rest of the vocabulary space. All inference is conducted at temperature $T=0.0$ (greedy decoding).

\subsubsection{Baseline Configurations}
\textbf{Verbalized confidence:} As illustrated in the Figure~\ref{fig:verbalized_confidence_prompt}, we append the following instruction to the base prompt: \textit{``Provide a score between 0.0 (total guess) and 1.0 (absolute certainty)."} The numerical output is parsed directly as the confidence score.

\begin{figure}[t]
\centering
\begin{tcolorbox}[
    enhanced,
    colback=gray!4!white,
    colframe=gray!70!black,
    coltitle=white,
    colbacktitle=gray!70!black,
    title=\textbf{Prompt Template for Verbalized Confidence},
    fonttitle=\small,
    boxrule=0.7pt,
    arc=2mm,
    left=8pt, right=8pt, top=6pt, bottom=6pt,
    boxsep=2pt,
]
\footnotesize
{\color{gray!60!black}\textbf{\textsf{[System]}}}\\[2pt]
Please act as an impartial judge and evaluate the quality of the responses provided by two AI assistants to the user question displayed below. You should choose the assistant that follows the user's instructions and answers the user's question better. Your evaluation should consider factors such as the helpfulness, relevance, accuracy, depth, creativity, and level of detail of their responses. Avoid any position biases and ensure that the order in which the responses were presented does not influence your decision. Do not allow the length of the responses to influence your evaluation. Do not favor certain names of the assistants. Be as objective as possible. Output your final verdict by strictly following this format: \texttt{"[[A]]"} if assistant A is better, \texttt{"[[B]]"} if assistant B is better. {\color{gray!55!black}\textbf{Provide a score between 0.0 (total guess) and 1.0 (absolute certainty).}}

\vspace{4pt}
\rule{\linewidth}{0.3pt}
\vspace{4pt}

{\color{gray!60!black}\textbf{\textsf{[User Question]}}}\\[2pt]
\texttt{\{question\}}

\vspace{6pt}
{\color{gray!60!black}\textbf{\textsf{[Assistant A's Answer]}}}\\[2pt]
\texttt{\{answer\_a\}}

\vspace{6pt}
{\color{gray!60!black}\textbf{\textsf{[Assistant B's Answer]}}}\\[2pt]
\texttt{\{answer\_b\}}
\end{tcolorbox}
\caption{Verbalized confidence prompt. As detailed in Appendix B.3.3, the instruction ``Provide a score between 0.0 (total guess) and 1.0 (absolute certainty)'' is appended to the standard pairwise evaluation prompt to elicit a numerical confidence estimate.}
\label{fig:verbalized_confidence_prompt}
\end{figure}

\textbf{Simulated Annotators~\citep{jung2025trust}:} 
We implement in-context learning ensembles to estimate uncertainty via agreement. For each query, we run the model $N=5$ times. Each run is conditioned on a distinct annotator persona defined by $K=5$ few-shot demonstrations, which are sampled from a larger pool of 50 examples. The final confidence score is calculated as the majority agreement ratio among these $N$ personas (see Figure~\ref{fig:simulated_annotators_prompt} for example prompt).

\begin{figure}[t]
\centering
\begin{tcolorbox}[
    enhanced,
    colback=gray!4!white,
    colframe=gray!70!black,
    coltitle=white,
    colbacktitle=gray!70!black,
    title=\textbf{Prompt Template for Simulated Annotators ($i$-th run)},
    fonttitle=\small,
    boxrule=0.7pt,
    arc=2mm,
    left=8pt, right=8pt, top=6pt, bottom=6pt,
    boxsep=2pt,
]
\footnotesize
{\color{gray!60!black}\textbf{\textsf{[System]}}}\\[2pt]
Please act as an impartial judge and evaluate the quality of the responses provided by two AI assistants to the user question displayed below. You should choose the assistant that follows the user's instructions and answers the user's question better\dots\ {\color{gray!55!black}\textit{[Standard System Instruction]}} \dots\ Output your final verdict by strictly following this format: \texttt{"[[A]]"} if assistant A is better, \texttt{"[[B]]"} if assistant B is better.

\vspace{4pt}
\rule{\linewidth}{0.3pt}
\vspace{4pt}

\begin{tcolorbox}[
    enhanced, breakable,
    colback=gray!8!white,
    colframe=gray!45!white,
    boxrule=0.4pt,
    arc=1mm,
    left=5pt, right=5pt, top=4pt, bottom=4pt,
    boxsep=1pt,
    title=\textbf{\textsf{[Few-Shot Demonstrations (Persona $i$)]}},
    fonttitle=\footnotesize\color{gray!55!black},
    colbacktitle=gray!8!white,
    coltitle=gray!55!black,
]
{\color{gray!55!black}\textit{\% $K=5$ examples sampled randomly from the pool of 50}}

\vspace{4pt}
{\color{gray!60!black}\textbf{\textsf{[User Question]}}}\\[2pt]
\texttt{\{example\_q1\}}

\vspace{4pt}
{\color{gray!60!black}\textbf{\textsf{[Assistant A's Answer]}}}\\[2pt]
\texttt{\{example\_a1\} \dots}

\vspace{4pt}
{\color{gray!60!black}\textbf{\textsf{[Assistant B's Answer]}}}\\[2pt]
\texttt{\{example\_b1\} \dots}

\vspace{4pt}
{\color{gray!60!black}\textbf{\textsf{Verdict:}}} \texttt{[[A]]}

\vspace{6pt}
{\color{gray!50!black}\textit{\dots\ (repeated for 5 examples) \dots}}
\end{tcolorbox}

\vspace{6pt}
{\color{gray!60!black}\textbf{\textsf{[Target User Question]}}}\\[2pt]
\texttt{\{question\}}

\vspace{6pt}
{\color{gray!60!black}\textbf{\textsf{[Assistant A's Answer]}}}\\[2pt]
\texttt{\{answer\_a\}}

\vspace{6pt}
{\color{gray!60!black}\textbf{\textsf{[Assistant B's Answer]}}}\\[2pt]
\texttt{\{answer\_b\}}
\end{tcolorbox}
\caption{Simulated Annotator Prompt. As implemented in the baselines, uncertainty is estimated by agreement. For each query, the model is run $N=5$ times. Each run is conditioned on a distinct set of $K=5$ few-shot demonstrations sampled from a pool of 50 examples. To ensure consistency, reasoning traces are omitted, and the few-shot examples follow the exact formatting of the target query.}
\label{fig:simulated_annotators_prompt}
\end{figure}

\subsubsection{Infrastructure}
All experiments with open-weights models were conducted using the HuggingFace Transformers library~\citep{wolf-etal-2020-transformers} on 2$\times$NVIDIA A100 (80GB) GPUs. To ensure statistical robustness, all risk control metrics (i.e., FDR and coverage) are averaged over $1,000$ random $50/50$ calibration-test splits, seeded for deterministic reproducibility.

\section{Extended Benchmarks}
\label{app:extended_benchmarks}
To assess generalization beyond MT-Bench, RewardBench, and Chatbot Arena, we evaluate on two additional benchmarks: JudgeBench~\citep{ICLR2025_9e720fce}, a curated benchmark of pairwise comparisons designed to stress-test LLM judges across knowledge, reasoning, math, and coding domains; and PKU-SafeRLHF~\citep{ji-etal-2025-pku}, a safety-focused preference dataset of harmful versus safe responses. Together these probe two regimes underrepresented in the main evaluation: reasoning-intensive judging (JudgeBench) and safety-aligned preference judging (PKU-SafeRLHF). We retain the same protocol as in the main paper ($N{=}2{,}000$ non-tied instances per dataset, $1{,}000$ random $50/50$ splits). For consistency across the two extended benchmarks, we report Qwen2.5-7B and Qwen2.5-14B judges, which is the common set for which both benchmarks were processed.

\subsection{Uncertainty Estimation Quality}
\label{app:extended_uncertainty_extra}
Table~\ref{tab:extended_uncertainty} reports the uncertainty estimation quality of BPE against the four baselines on JudgeBench and PKU-SafeRLHF. As in Table~\ref{tab:uncertainty_baselines}, BPE achieves the lowest ECE and the highest AUROC/AUPRC in nearly every cell, showing that the calibration and discrimination advantages of probability-level bidirectional aggregation extend to reasoning-heavy and safety-focused settings. The largest absolute gains appear on PKU-SafeRLHF, where BPE improves AUPRC by up to $4.1$ points over the strongest baseline. On JudgeBench, where all uncertainty methods struggle (the task is genuinely harder), BPE still delivers the best calibration and the best ranking under AUPRC.

\begin{table*}[!ht]
\caption{Uncertainty estimation quality on the two additional benchmarks. Bold marks the best entry per model/dataset.}
\label{tab:extended_uncertainty}
\centering
\begin{tabular*}{\textwidth}{@{\extracolsep{\fill}} l cccc cccc @{}}
\toprule
& \multicolumn{4}{c}{\textbf{JudgeBench}}
& \multicolumn{4}{c}{\textbf{PKU-SafeRLHF}} \\
\cmidrule(lr){2-5} \cmidrule(lr){6-9}
\textbf{Method}
& Acc & ECE\,$\downarrow$ & ROC & PRC
& Acc & ECE\,$\downarrow$ & ROC & PRC \\
\midrule
\multicolumn{9}{c}{\textbf{\textit{Qwen2.5-7B-Instruct}}} \\
\midrule
Predictive Probability    & \textbf{0.596} & 0.335 & 0.559 & 0.662 & 0.672 & 0.268 & \textbf{0.671} & 0.803 \\
Verbalized Confidence     & \textbf{0.596} & 0.236 & 0.464 & 0.570 & 0.672 & 0.149 & 0.547 & 0.704 \\
Swap-and-Aggregate        & 0.582 & 0.292 & \textbf{0.589} & 0.667 & \textbf{0.706} & 0.194 & 0.641 & 0.809 \\
BPE (Ours)                & 0.582 & \textbf{0.174} & 0.583 & \textbf{0.716} & \textbf{0.706} & \textbf{0.136} & 0.648 & \textbf{0.820} \\
\midrule
\multicolumn{9}{c}{\textbf{\textit{Qwen2.5-14B-Instruct}}} \\
\midrule
Predictive Probability    & 0.600 & 0.367 & 0.575 & 0.663 & 0.674 & 0.261 & 0.688 & 0.809 \\
Verbalized Confidence     & 0.600 & 0.210 & 0.430 & 0.565 & 0.674 & \textbf{0.084} & 0.582 & 0.721 \\
Swap-and-Aggregate        & \textbf{0.604} & 0.292 & 0.581 & 0.671 & \textbf{0.682} & 0.210 & 0.684 & 0.812 \\
BPE (Ours)                & \textbf{0.604} & \textbf{0.232} & \textbf{0.597} & \textbf{0.734} & \textbf{0.682} & 0.150 & \textbf{0.689} & \textbf{0.845} \\
\bottomrule
\end{tabular*}
\end{table*}

\subsection{Risk Control and Coverage}
\label{app:extended_risk}

Table~\ref{tab:extended_risk} reports coverage and risk for \textsc{Scope} at $\alpha=0.10$ on the two additional benchmarks, and Figure~\ref{fig:extended_validity} shows the full validity curve across $\alpha\in\{0.05,\dots,0.30\}$. \textsc{Scope} consistently satisfies the risk bound. Coverage scales with judge capability as expected: stronger judges produce more confidently-correct decisions, yielding more accepted predictions under the same risk budget. PKU-SafeRLHF, where pairwise contrasts between safe and harmful responses are typically clearer, admits higher coverage than JudgeBench at the same model scale and exhibits much tighter split-to-split variance.

\begin{figure*}[!ht]
    \centering
    \includegraphics[width=0.85\textwidth]{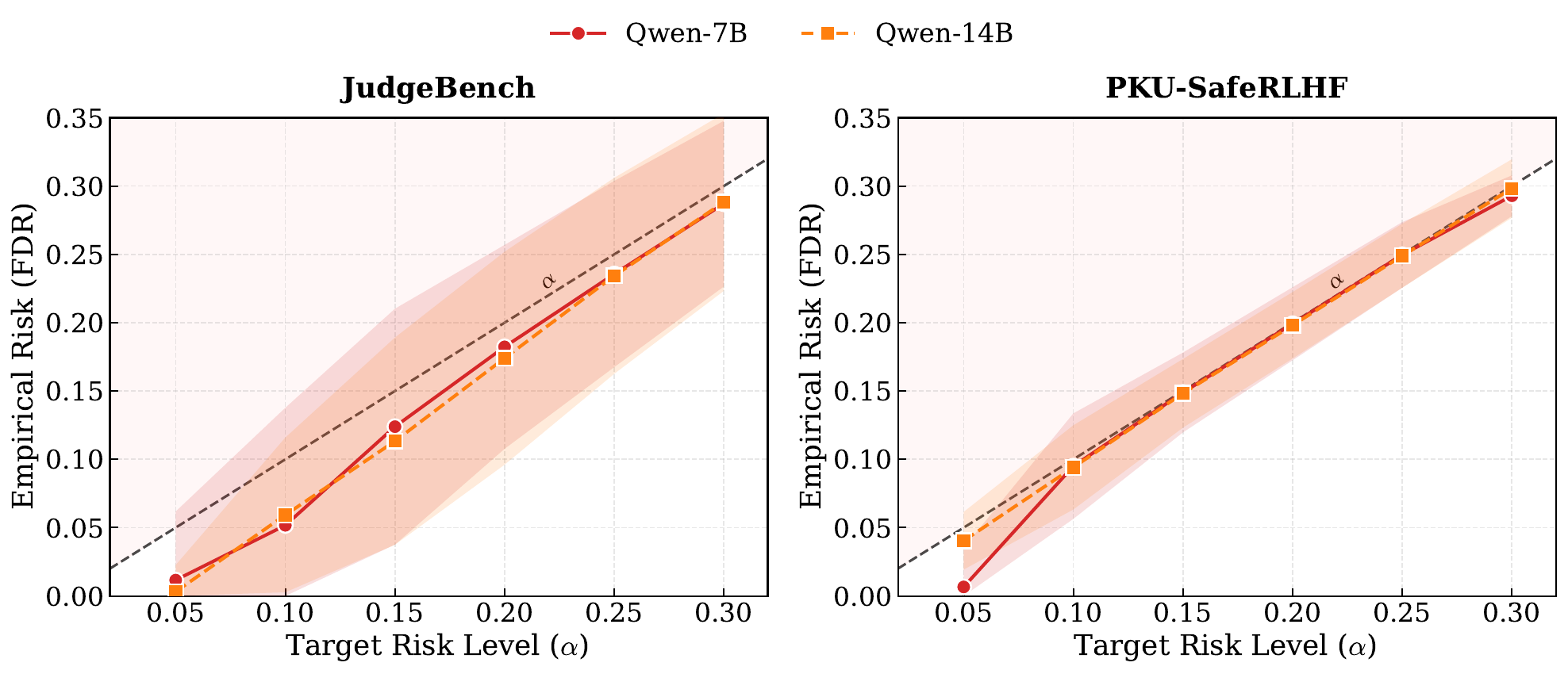}
    \caption{Statistical validity of \textsc{Scope} on JudgeBench and PKU-SafeRLHF. We report the empirical risk (FDR) against the user-specified target risk level $\alpha$. The dashed diagonal ($y=x$) indicates the theoretical safety limit; curves remaining below it demonstrate valid risk control. Solid lines are the mean risk over $1{,}000$ splits; shaded regions are $\pm 1\sigma$. \textsc{Scope} satisfies the risk constraint on both extended benchmarks.}
    \label{fig:extended_validity}
\end{figure*}

\begin{table}[!ht]
\caption{\textsc{Scope} coverage and empirical risk at $\alpha=0.10$ on JudgeBench and PKU-SafeRLHF (mean over $1{,}000$ random splits; std in parentheses). All empirical risks remain at or below the target.}
\label{tab:extended_risk}
\footnotesize
\centering
\begin{tabular*}{\columnwidth}{@{\extracolsep{\fill}} l cc cc @{}}
\toprule
& \multicolumn{2}{c}{\textbf{JudgeBench}} & \multicolumn{2}{c}{\textbf{PKU-SafeRLHF}} \\
\cmidrule(lr){2-3} \cmidrule(lr){4-5}
\textbf{Judge} & Coverage & Risk & Coverage & Risk \\
\midrule
Qwen2.5-7B    & 0.023\,\scriptsize(0.041) & 0.052\,\scriptsize(0.086) & 0.139\,\scriptsize(0.049) & 0.095\,\scriptsize(0.039) \\
Qwen2.5-14B   & 0.114\,\scriptsize(0.014) & 0.059\,\scriptsize(0.057) & 0.200\,\scriptsize(0.036) & 0.094\,\scriptsize(0.031) \\
\bottomrule
\end{tabular*}
\end{table}

\section{Extended Baseline Comparisons}
\label{app:extended_baselines}

This appendix reports additional comparisons that complement the main results. We emphasize that the metric of interest throughout is the quality of the confidence/uncertainty signal (ECE, AUROC, AUPRC), since \textsc{Scope}'s calibration consumes a continuous uncertainty score and accepts predictions based on that score; accuracy is reported for transparency but is not the criterion by which selective evaluation methods should be compared.

\subsection{Additional Uncertainty Estimation Baselines}
\label{app:extended_uncertainty}

\paragraph*{Self-Consistency.}
The method of \citet{002WSLCNCZ23} aggregate $k$ sampled generations via majority vote, using the vote-agreement ratio as confidence~\citep{Lyu_Callison-Burch_2025}. We instantiate $k{=}5$ at $T{=}0.7$ and restrict the comparison to Qwen2.5-7B and Qwen2.5-14B judges due to the sampling cost. BPE numbers in Table~\ref{tab:sc_comparison} are reproduced from Table~\ref{tab:uncertainty_baselines} for direct cross-reference.

\begin{table*}[!ht]
\caption{Comparison against Self-Consistency (SC, $k{=}5$ samples at $T{=}0.7$). Although SC's majority vote yields higher accuracy, its vote-agreement confidence takes only three discrete values and is uncorrelated or anti-correlated with correctness (AUROC $\le 0.5$ in 5 of 6 cells). BPE improves over SC on ECE, AUROC, and AUPRC in every cell. Bold marks the better entry per cell.}
\label{tab:sc_comparison}
\centering
\footnotesize
\setlength{\tabcolsep}{4pt}
\begin{tabular}{l cccc @{\hskip 8pt} cccc @{\hskip 8pt} cccc}
\toprule
& \multicolumn{4}{c}{\textbf{MT-Bench}}
& \multicolumn{4}{c}{\textbf{RewardBench}}
& \multicolumn{4}{c}{\textbf{Chatbot Arena}} \\
\cmidrule(lr){2-5} \cmidrule(lr){6-9} \cmidrule(lr){10-13}
\textbf{Method}
& Acc & ECE\,$\downarrow$ & ROC & PRC
& Acc & ECE\,$\downarrow$ & ROC & PRC
& Acc & ECE\,$\downarrow$ & ROC & PRC \\
\midrule
\multicolumn{13}{c}{\textbf{\textit{Qwen2.5-7B-Instruct}}} \\
\midrule
Self-Consistency ($k{=}5$) & \textbf{0.843} & 0.241 & 0.413 & 0.823 & \textbf{0.896} & 0.197 & 0.426 & 0.886 & \textbf{0.839} & 0.232 & 0.446 & 0.830 \\
BPE (Ours)                 & 0.738 & \textbf{0.143} & \textbf{0.685} & \textbf{0.855} & 0.807 & \textbf{0.104} & \textbf{0.744} & \textbf{0.926} & 0.766 & \textbf{0.138} & \textbf{0.711} & \textbf{0.884} \\
\midrule
\multicolumn{13}{c}{\textbf{\textit{Qwen2.5-14B-Instruct}}} \\
\midrule
Self-Consistency ($k{=}5$) & \textbf{0.837} & 0.212 & 0.464 & 0.829 & \textbf{0.909} & 0.137 & 0.501 & 0.911 & \textbf{0.828} & 0.216 & 0.473 & 0.822 \\
BPE (Ours)                 & 0.766 & \textbf{0.174} & \textbf{0.777} & \textbf{0.932} & 0.874 & \textbf{0.103} & \textbf{0.809} & \textbf{0.970} & 0.790 & \textbf{0.169} & \textbf{0.744} & \textbf{0.927} \\
\bottomrule
\end{tabular}
\end{table*}

While SC's majority vote raises accuracy by $7$--$10$ absolute points over greedy BPE, its discrete vote-agreement confidence is unsuitable as input to threshold-based selective evaluation: AUROC values are at or below chance in five of six cells, indicating the signal does not separate correct from incorrect judgments. BPE's continuous entropy is well-ranked (AUROC $0.68$--$0.81$) and better calibrated (lower ECE) at a fraction of the compute (two deterministic passes versus five sampled generations).

\subsection{Additional Selective Prediction Baselines}
\label{app:extended_selective}

\paragraph*{Trust-or-Escalate.}
Introduced by~\citet{jung2025trust}, Trust-or-Escalate (ToE) is a selective evaluation framework that uses Clopper--Pearson upper confidence bounds~\citep{be7c0fd0_cp} with fixed-sequence testing~\citep{bauer1991multiple} to certify the accepted-set error rate. We instantiate ToE with the majority-vote confidence signal from Simulated Annotators ($N{=}5$, $K{=}5$), which takes three discrete values $\{0.6, 0.8, 1.0\}$. Because ToE requires $N$ generations per instance and the simulated-annotator pool is dataset-specific, we restrict the comparison to MT-Bench. \textsc{Scope} coverage values are reproduced from Table~\ref{tab:risk_coverage_comparison}.

\begin{table}[!ht]
\caption{Coverage at $\alpha=0.10$ on MT-Bench for Trust-or-Escalate (ToE) and \textsc{Scope}.}
\label{tab:toe_comparison}
\centering
\begin{tabular*}{\columnwidth}{@{\extracolsep{\fill}} l cccc @{}}
\toprule
\textbf{Method} & \textbf{Qwen-7B} & \textbf{Qwen-14B} & \textbf{Qwen-32B} & \textbf{Llama-70B} \\
\midrule
ToE (SimAnnot-Maj) & 0.226 & 0.200 & 0.201 & 0.171 \\
\textsc{Scope} (Ours) & \textbf{0.246} & \textbf{0.504} & \textbf{0.502} & \textbf{0.543} \\
\bottomrule
\end{tabular*}
\end{table}

ToE's coverage is capped at $17$--$23\%$ because its three discrete confidence levels create blocks of identical-confidence items, and the fixed-sequence test can effectively only admit the unanimous-agreement block. \textsc{Scope} achieves $1.1$--$3.2\times$ higher coverage under the same risk constraint by combining BPE's continuous well-spread uncertainty signal with the linear-expectation constraint (Eq.~\ref{eq:scope_condition}), which is less conservative than enforcing a per-prefix UCB.

\section{Drop-in Deployment on a Specialized Critic}
\label{app:skywork}

To demonstrate that \textsc{Scope} is drop-in deployable on existing fine-tuned critics, we evaluate it on Skywork-Critic-Llama-3.1-8B~\citep{skyworkcritic2024}, a publicly released specialized judge fine-tuned from Llama-3.1-8B for pairwise reward modeling. Unlike the general-purpose instruction-tuned models studied in the main paper, Skywork-Critic is purpose-built for the judging task itself, making it a natural stress test for whether BPE and \textsc{Scope} transfer to off-the-shelf critics. We use the same three benchmarks as in the main paper (MT-Bench, RewardBench, Chatbot Arena) and the same protocol ($N{=}2{,}000$ instances, $1{,}000$ random $50/50$ splits).

\paragraph{Uncertainty estimation quality.}
Table~\ref{tab:skywork_uncertainty} reports BPE against the four baselines on Skywork-Critic. Two observations stand out. First, BPE achieves the lowest ECE and the highest AUPRC on all three benchmarks, confirming that probability-level bidirectional aggregation transfers to specialized critics. Second, Verbalized Confidence collapses to AUROC $=0.500$ on every dataset. Skywork-Critic, being fine-tuned only for the pairwise decision token, does not produce meaningful self-reported numerical confidence scores. This is a strong demonstration that the choice of uncertainty signal is judge-dependent: a baseline that works on instruction-tuned models can be entirely uninformative on a specialized critic.

\begin{table}[!ht]
\caption{Uncertainty estimation quality on Skywork-Critic-Llama-3.1-8B. Bold marks the best entry per dataset. Verbalized Confidence collapses on this specialized critic; BPE still delivers the best calibration and ranking.}
\label{tab:skywork_uncertainty}
\centering
\begin{tabular*}{\columnwidth}{@{\extracolsep{\fill}} l cccc @{}}
\toprule
\textbf{Method} & Acc & ECE\,$\downarrow$ & ROC & PRC \\
\midrule
\multicolumn{5}{c}{\textbf{\textit{MT-Bench}}} \\
\midrule
Predictive Probability & 0.771 & 0.161 & 0.706 & 0.886 \\
Verbalized Confidence  & 0.771 & 0.271 & 0.500 & 0.771 \\
Swap-and-Aggregate     & \textbf{0.776} & 0.152 & 0.696 & 0.886 \\
BPE (Ours)             & \textbf{0.776} & \textbf{0.139} & \textbf{0.710} & \textbf{0.890} \\
\midrule
\multicolumn{5}{c}{\textbf{\textit{RewardBench}}} \\
\midrule
Predictive Probability & 0.885 & 0.059 & \textbf{0.816} & 0.970 \\
Verbalized Confidence  & 0.885 & 0.384 & 0.500 & 0.885 \\
Swap-and-Aggregate     & \textbf{0.904} & 0.049 & 0.780 & 0.970 \\
BPE (Ours)             & \textbf{0.904} & \textbf{0.039} & 0.795 & \textbf{0.972} \\
\midrule
\multicolumn{5}{c}{\textbf{\textit{Chatbot Arena}}} \\
\midrule
Predictive Probability & 0.759 & 0.175 & \textbf{0.711} & 0.881 \\
Verbalized Confidence  & 0.759 & 0.259 & 0.500 & 0.759 \\
Swap-and-Aggregate     & \textbf{0.766} & 0.166 & 0.699 & 0.880 \\
BPE (Ours)             & \textbf{0.766} & \textbf{0.151} & 0.703 & \textbf{0.884} \\
\bottomrule
\end{tabular*}
\end{table}

\paragraph{Risk control and coverage.}
Table~\ref{tab:skywork_risk} reports \textsc{Scope} coverage and empirical risk at $\alpha=0.10$ on Skywork-Critic across the three benchmarks. All empirical risks remain at or below the target. The most striking result is on RewardBench where \textsc{Scope} achieves $99.6\%$ coverage. 

\begin{table}[!ht]
\caption{\textsc{Scope} coverage and empirical risk at $\alpha=0.10$ on Skywork-Critic-Llama-3.1-8B (mean over $1{,}000$ random splits; std in parentheses). All empirical risks remain at or below the target $\alpha=0.10$. On RewardBench, \textsc{Scope} retains $99.6\%$ coverage.}
\label{tab:skywork_risk}
\centering
\begin{tabular*}{\columnwidth}{@{\extracolsep{\fill}} l cc @{}}
\toprule
\textbf{Dataset} & Coverage & Risk \\
\midrule
MT-Bench       & 0.380\,\scriptsize(0.064) & 0.098\,\scriptsize(0.023) \\
RewardBench    & 0.996\,\scriptsize(0.008) & 0.094\,\scriptsize(0.008) \\
Chatbot Arena  & 0.395\,\scriptsize(0.050) & 0.098\,\scriptsize(0.021) \\
\bottomrule
\end{tabular*}
\end{table}

\section{Robustness to Label Noise on the Calibration Set}
\label{app:label_noise}

A natural concern with conformal calibration is that it relies on accurate ground-truth labels on the calibration set; in practice, human preference labels are noisy. We probe the consequences by injecting random label noise of $\epsilon\in\{0\%, 5\%, 10\%, 15\%, 20\%\}$ into the calibration set (flipping the preferred response for $\epsilon$ of the calibration instances, uniformly at random) and re-running \textsc{Scope} calibration. The test labels are left clean so that empirical risk and coverage measure how \textsc{Scope} behaves with respect to the true preferences. Table~\ref{tab:label_noise} reports the resulting risk and coverage at $\alpha=0.10$ across the four judges and three benchmarks ($500$ random splits per cell).

\begin{table*}[!ht]
\caption{Risk control under label noise on the calibration set ($\alpha=0.10$, $500$ random splits). Each cell shows empirical risk\,/\,coverage. \textsc{Scope}'s empirical risk remains at or below $\alpha$ at every noise level; coverage degrades gracefully as noise increases.}
\label{tab:label_noise}
\footnotesize
\centering
\begin{tabular*}{\textwidth}{@{\extracolsep{\fill}} ll ccccc @{}}
\toprule
\textbf{Judge} & \textbf{Dataset} & $\epsilon=0\%$ & $\epsilon=5\%$ & $\epsilon=10\%$ & $\epsilon=15\%$ & $\epsilon=20\%$ \\
\midrule
\multirow{3}{*}{Qwen-7B}
& MT-Bench    & 0.098 / 0.248 & 0.046 / 0.059 & 0.020 / 0.015 & 0.007 / 0.004 & 0.002 / 0.001 \\
& RewardBench & 0.098 / 0.590 & 0.055 / 0.380 & 0.010 / 0.092 & 0.000 / 0.003 & 0.000 / 0.000 \\
& Arena       & 0.100 / 0.434 & 0.048 / 0.136 & 0.011 / 0.010 & 0.004 / 0.002 & 0.000 / 0.000 \\
\midrule
\multirow{3}{*}{Qwen-14B}
& MT-Bench    & 0.098 / 0.506 & 0.053 / 0.467 & 0.003 / 0.137 & 0.000 / 0.000 & 0.000 / 0.000 \\
& RewardBench & 0.099 / 0.891 & 0.055 / 0.668 & 0.003 / 0.195 & 0.000 / 0.000 & 0.000 / 0.000 \\
& Arena       & 0.098 / 0.466 & 0.053 / 0.413 & 0.004 / 0.088 & 0.000 / 0.000 & 0.000 / 0.000 \\
\midrule
\multirow{3}{*}{Qwen-32B}
& MT-Bench    & 0.100 / 0.506 & 0.049 / 0.230 & 0.006 / 0.054 & 0.000 / 0.001 & 0.000 / 0.000 \\
& RewardBench & 0.098 / 0.982 & 0.055 / 0.816 & 0.005 / 0.154 & 0.000 / 0.000 & 0.000 / 0.000 \\
& Arena       & 0.099 / 0.552 & 0.053 / 0.278 & 0.006 / 0.073 & 0.000 / 0.000 & 0.000 / 0.000 \\
\midrule
\multirow{3}{*}{Llama-70B}
& MT-Bench    & 0.099 / 0.548 & 0.053 / 0.147 & 0.022 / 0.022 & 0.008 / 0.006 & 0.004 / 0.002 \\
& RewardBench & 0.098 / 0.933 & 0.054 / 0.646 & 0.010 / 0.136 & 0.001 / 0.019 & 0.000 / 0.004 \\
& Arena       & 0.100 / 0.584 & 0.056 / 0.331 & 0.020 / 0.064 & 0.005 / 0.009 & 0.002 / 0.002 \\
\bottomrule
\end{tabular*}
\end{table*}

The pattern is consistent across judges and datasets. Empirical risk remains below $\alpha=0.10$ at every noise level, and in fact decreases as noise increases. \textsc{Scope} at test time, applied to a noisier calibration set, sees a higher empirical error rate among accepted predictions and responds by tightening the threshold. Coverage degrades correspondingly. From the user's perspective this is the desirable failure mode: \textsc{Scope} does not silently violate the risk constraint under realistic label imperfection; it becomes more conservative, abstaining more often rather than emitting unreliable judgments. At $\epsilon=20\%$ many configurations collapse to near-zero coverage, indicating that \textsc{Scope} effectively refuses to calibrate when the calibration set is too unreliable.

\section{Calibration Set Size Ablation}
\label{app:calibration_size}

A practical question for deployment is how much labelled calibration data \textsc{Scope} requires. We vary the calibration set size $n\in\{50, 100, 200, 500, 1{,}000\}$ and re-run the calibration procedure on the four judges and three benchmarks at $\alpha=0.10$ ($500$ random splits per cell). The remaining instances are held out as the test set. Table~\ref{tab:calibration_size} reports the resulting coverage and empirical risk.

\begin{table*}[!ht]
\caption{Calibration set size ablation ($\alpha=0.10$, $500$ random splits). Each cell shows coverage\,/\,risk. Empirical risk remains at or below the target $\alpha$ at every size; coverage increases with $n$ and is largely converged by $n=200$--$500$.}
\label{tab:calibration_size}
\footnotesize
\centering
\begin{tabular*}{\textwidth}{@{\extracolsep{\fill}} ll ccccc @{}}
\toprule
\textbf{Judge} & \textbf{Dataset} & $n=50$ & $n=100$ & $n=200$ & $n=500$ & $n=1{,}000$ \\
\midrule
\multirow{3}{*}{Qwen-7B}
& MT-Bench    & 0.189 / 0.057 & 0.223 / 0.073 & 0.234 / 0.084 & 0.249 / 0.096 & 0.248 / 0.098 \\
& RewardBench & 0.467 / 0.076 & 0.549 / 0.089 & 0.579 / 0.095 & 0.591 / 0.098 & 0.590 / 0.098 \\
& Arena       & 0.259 / 0.062 & 0.320 / 0.077 & 0.375 / 0.090 & 0.419 / 0.098 & 0.434 / 0.100 \\
\midrule
\multirow{3}{*}{Qwen-14B}
& MT-Bench    & 0.473 / 0.064 & 0.541 / 0.081 & 0.533 / 0.090 & 0.523 / 0.097 & 0.506 / 0.098 \\
& RewardBench & 0.795 / 0.076 & 0.843 / 0.088 & 0.872 / 0.094 & 0.890 / 0.098 & 0.891 / 0.099 \\
& Arena       & 0.373 / 0.064 & 0.495 / 0.081 & 0.490 / 0.088 & 0.480 / 0.096 & 0.466 / 0.098 \\
\midrule
\multirow{3}{*}{Qwen-32B}
& MT-Bench    & 0.354 / 0.069 & 0.428 / 0.082 & 0.487 / 0.094 & 0.507 / 0.099 & 0.506 / 0.100 \\
& RewardBench & 0.855 / 0.071 & 0.926 / 0.082 & 0.956 / 0.090 & 0.977 / 0.096 & 0.982 / 0.098 \\
& Arena       & 0.380 / 0.071 & 0.471 / 0.086 & 0.512 / 0.092 & 0.539 / 0.097 & 0.552 / 0.099 \\
\midrule
\multirow{3}{*}{Llama-70B}
& MT-Bench    & 0.360 / 0.069 & 0.442 / 0.085 & 0.499 / 0.094 & 0.541 / 0.100 & 0.548 / 0.099 \\
& RewardBench & 0.743 / 0.074 & 0.849 / 0.086 & 0.901 / 0.093 & 0.927 / 0.098 & 0.933 / 0.098 \\
& Arena       & 0.431 / 0.075 & 0.520 / 0.089 & 0.557 / 0.094 & 0.579 / 0.098 & 0.584 / 0.100 \\
\bottomrule
\end{tabular*}
\end{table*}

\textsc{Scope}'s risk control is preserved across all calibration sizes; even with as few as $n=50$ labelled calibration instances, empirical risk stays below $\alpha=0.10$. Coverage grows with $n$ as expected and is largely converged by $n=200$--$500$: across all cells, the coverage at $n=200$ is within $5$--$10\%$ of its $n=1{,}000$ value, and the marginal gain beyond $n=500$ is small. In practice this implies that \textsc{Scope} delivers most of its utility with modest annotation budgets ($n=200$ suffices for near-asymptotic coverage on these benchmarks), making the framework attractive when labelled preference data is expensive to collect.

\section{Cross-Benchmark Calibration Transfer}
\label{app:cross_benchmark}

To probe how \textsc{Scope}'s threshold transfers across distributions, we calibrate on one benchmark and evaluate on another at $\alpha=0.10$ ($500$ random splits per cell). Table~\ref{tab:cross_benchmark} reports the resulting coverage and risk: the within-benchmark setting (italicized diagonal) satisfies the target, and transfer between similar-difficulty pairs (e.g., MT-Bench $\leftrightarrow$ Chatbot Arena) also holds. However, calibrating on an easier benchmark (RewardBench) and testing on a harder one violates the bound (bold cells, risk up to $0.20$), while the reverse direction is over-conservative ($0.035$--$0.067$). This is the expected behavior of conformal calibration under distribution shift and reinforces the exchangeability discussion in Section~\ref{sec:limitations}: re-calibrate on the target distribution whenever possible.

\begin{table*}[!t]
\caption{Cross-benchmark calibration transfer at $\alpha=0.10$ (mean over $500$ random splits). Rows are the calibration benchmark; columns are the test benchmark. Each cell shows coverage / risk. Diagonal entries (italicized) are the within-benchmark setting. Bold marks empirical risk that exceeds the target $\alpha$, indicating exchangeability failure when the calibration and test distributions differ substantially.}
\label{tab:cross_benchmark}
\footnotesize
\centering
\begin{tabular*}{\textwidth}{@{\extracolsep{\fill}} ll ccc @{}}
\toprule
& & \multicolumn{3}{c}{\textbf{Test benchmark}} \\
\cmidrule(lr){3-5}
\textbf{Judge} & \textbf{Calibration benchmark} & MT-Bench & RewardBench & Chatbot Arena \\
\midrule
\multirow{3}{*}{Qwen-7B}
& MT-Bench    & \textit{0.248 / 0.098} & 0.350 / 0.049 & 0.292 / 0.079 \\
& RewardBench & 0.596 / \textbf{0.157} & \textit{0.590 / 0.098} & 0.631 / \textbf{0.149} \\
& Chatbot Arena & 0.395 / \textbf{0.122} & 0.442 / 0.067 & \textit{0.434 / 0.100} \\
\midrule
\multirow{3}{*}{Qwen-14B}
& MT-Bench    & \textit{0.506 / 0.098} & 0.585 / 0.039 & 0.446 / 0.091 \\
& RewardBench & 0.876 / \textbf{0.183} & \textit{0.891 / 0.099} & 0.884 / \textbf{0.187} \\
& Chatbot Arena & 0.521 / \textbf{0.103} & 0.598 / 0.043 & \textit{0.466 / 0.098} \\
\midrule
\multirow{3}{*}{Qwen-32B}
& MT-Bench    & \textit{0.506 / 0.100} & 0.674 / 0.035 & 0.505 / 0.090 \\
& RewardBench & 0.980 / \textbf{0.193} & \textit{0.982 / 0.098} & 0.977 / \textbf{0.203} \\
& Chatbot Arena & 0.562 / \textbf{0.107} & 0.713 / 0.038 & \textit{0.552 / 0.099} \\
\midrule
\multirow{3}{*}{Llama-70B}
& MT-Bench    & \textit{0.548 / 0.099} & 0.618 / 0.053 & 0.582 / 0.099 \\
& RewardBench & 0.917 / \textbf{0.189} & \textit{0.933 / 0.098} & 0.925 / \textbf{0.171} \\
& Chatbot Arena & 0.550 / 0.099 & 0.618 / 0.053 & \textit{0.584 / 0.100} \\
\bottomrule
\end{tabular*}
\end{table*}

\end{document}